\documentclass[pdflatex,sn-mathphys-num]{sn-jnl}% Math and Physical Sciences Numbered Reference Style 
%%\documentclass[pdflatex,sn-mathphys-ay]{sn-jnl}% Math and Physical Sciences Author Year Reference Style
%%\documentclass[pdflatex,sn-aps]{sn-jnl}% American Physical Society (APS) Reference Style
%%\documentclass[pdflatex,sn-vancouver,Numbered]{sn-jnl}% Vancouver Reference Style
%%\documentclass[pdflatex,sn-apa]{sn-jnl}% APA Reference Style 
%%\documentclass[pdflatex,sn-chicago]{sn-jnl}% Chicago-based Humanities Reference Style

%%%% Standard Packages
%%<additional latex packages if required can be included here>

\usepackage{graphicx}%
\usepackage{multirow}%
\usepackage{amsmath,amssymb,amsfonts}%
\usepackage{amsthm}%
\usepackage{mathrsfs}%
\usepackage[title]{appendix}%
\usepackage{xcolor}%
\usepackage{textcomp}%
\usepackage{manyfoot}%
\usepackage{booktabs}%
\usepackage{algorithm}%
\usepackage{algorithmicx}%
\usepackage{algpseudocode}%
\usepackage{listings}%
\usepackage{float}
\usepackage{graphicx}
\usepackage{subcaption}
\usepackage{longtable}
\usepackage{multirow}
\usepackage{booktabs}
%%%%

%%%%%=============================================================================%%%%
%%%%  Remarks: This template is provided to aid authors with the preparation
%%%%  of original research articles intended for submission to journals published 
%%%%  by Springer Nature. The guidance has been prepared in partnership with 
%%%%  production teams to conform to Springer Nature technical requirements. 
%%%%  Editorial and presentation requirements differ among journal portfolios and 
%%%%  research disciplines. You may find sections in this template are irrelevant 
%%%%  to your work and are empowered to omit any such section if allowed by the 
%%%%  journal you intend to submit to. The submission guidelines and policies 
%%%%  of the journal take precedence. A detailed User Manual is available in the 
%%%%  template package for technical guidance.
%%%%%=============================================================================%%%%

%% as per the requirement new theorem styles can be included as shown below
\theoremstyle{thmstyleone}%
\newtheorem{theorem}{Theorem}%  meant for continuous numbers
%%\newtheorem{theorem}{Theorem}[section]% meant for sectionwise numbers
%% optional argument [theorem] produces theorem numbering sequence instead of independent numbers for Proposition
% 
%%\newtheorem{proposition}{Proposition}% to get separate numbers for theorem and proposition etc.

\theoremstyle{thmstyletwo}%

\newtheorem{ass}{Assumption}[section]

\newtheorem{pfc}{Proof}[section]
\newtheorem{cor}{Corollary}[section]
\theoremstyle{thmstylethree}%

\raggedbottom
%%\unnumbered% uncomment this for unnumbered level heads

\begin{document}

\title[A Subsampling Based Neural Network for Spatial Data]{A Subsampling Based Neural Network for Spatial Data}

%%=============================================================%%
%% GivenName	-> \fnm{Joergen W.}
%% Particle	-> \spfx{van der} -> surname prefix
%% FamilyName	-> \sur{Ploeg}
%% Suffix	-> \sfx{IV}
%% \author*[1,2]{\fnm{Joergen W.} \spfx{van der} \sur{Ploeg} 
%%  \sfx{IV}}\email{iauthor@gmail.com}
%%=============================================================%%

\author*[1]{\fnm{Debjoy} \sur{Thakur}}\email{debjoy@wustl.edu}

% \author[2,3]{\fnm{Second} \sur{Author}}\email{iiauthor@gmail.com}
% \equalcont{These authors contributed equally to this work.}

% \author[1,2]{\fnm{Third} \sur{Author}}\email{iiiauthor@gmail.com}
% \equalcont{These authors contributed equally to this work.}

\affil*[1]{\orgdiv{Department of Statistics and Data Science}, \orgname{Washington Univeristy in St. Louis}, \orgaddress{ \city{St. Louis}, \postcode{63130}, \state{Missouri}, \country{USA}}}

% \affil[2]{\orgdiv{Department}, \orgname{Organization}, \orgaddress{\street{Street}, \city{City}, \postcode{10587}, \state{State}, \country{Country}}}

% \affil[3]{\orgdiv{Department}, \orgname{Organization}, \orgaddress{\street{Street}, \city{City}, \postcode{610101}, \state{State}, \country{Country}}}

%%==================================%%
%% Sample for unstructured abstract %%
%%==================================%%

\abstract{The application of deep neural networks in geospatial data has become a trending research problem in the present day. A significant amount of statistical research has already been introduced, such as generalized least square optimization by incorporating spatial variance-covariance matrix, considering basis functions in the input nodes of the neural networks, and so on. However, for lattice data, there is no available literature about the utilization of asymptotic analysis of neural networks in regression for spatial data. This article proposes a consistent localized two-layer deep neural network-based regression for spatial data. We have proved the consistency of this deep neural network for bounded and unbounded spatial domains under a fixed sampling design of mixed-increasing spatial regions. We have proved that its asymptotic convergence rate is faster than that of \cite{zhan2024neural}'s neural network and an improved generalization of \cite{shen2023asymptotic}'s neural network structure. We empirically observe the rate of convergence of discrepancy measures between the empirical probability distribution of observed and predicted data, which will become faster for a less smooth spatial surface. We have applied our asymptotic analysis of deep neural networks to the estimation of the monthly average temperature of major cities in the USA from its satellite image. This application is an effective showcase of non-linear spatial regression. We demonstrate our methodology with simulated lattice data in various scenarios.}

\keywords{Spatial Subsampling, Deep Neural Network, Lattice Data, Convergence}

%%\pacs[JEL Classification]{D8, H51}

%%\pacs[MSC Classification]{35A01, 65L10, 65L12, 65L20, 65L70}

\maketitle

\section{Introduction}
In recent years, the application of machine learning and neural network models in spatial statistics has been a highly demanding and challenging research problem. Many researchers have focused on the application of neural networks in geospatial data sets for example pollution data, temperature data, spatial disease mapping, spatial econometrics, and so on. The neural network becomes popular day by day for its flexibility, and prediction accuracy. From a statistical point of view, many researchers focus on its explainability. For example \cite{schmidt2020nonparametric} has developed the convergence rate of a multilayer compositional neural network for independent and identically distributed (iid) feature space. \cite{kohler2021rate} has proved the optimum minimax rate of convergence of a fully connected deep neural network. \cite{kohler2023rate} has proved the asymptotic convergence of convolutional neural networks for image classifiers. \cite{braun2024convergence} has proved the asymptotic convergence rate of shallow neural networks. \cite{shen2023asymptotic} has proved the asymptotic convergence and asymptotic normality of the sieve neural network. These articles critically review the asymptotic convergence of neural networks under some regularity conditions.

Now coming to the application of neural networks in spatial data there are also some significant contributions. \cite{chen2020deepkriging} has introduced ``Deep Kriging'' (DK) by considering the basis functions in the input node of deep neural network similar concept of fixed rank kriging (FRK) of \cite{cressie2008fixed}. \cite{nag2023spatio} has extended this DK in a spatio-temporal framework. \cite{gerber2021fast} estimate parameter of Matern covariance function. \cite{zhan2024neural} has introduced a generalized least square-based graphical neural network for geospatial data. \cite{xiao2023nonparametric} has proposed a neural network-based spatial autoregressive model. \cite{wikle2023statistical} has reviewed almost all recently proposed different types of neural network-based models for spatial and spatio-temporal data. 

The literature has a significant gap regarding the asymptotic inference of neural network estimates for spatial data. From the literature, all the researchers mainly focus on solving two major problems, estimating mean function (\cite{chen2020deepkriging}), and parameter estimation of covariance function (\cite{gerber2021fast}). But \cite{zhan2024neural} has first proposed the spatial dependent graphical neural networking assuming the spatial points as nodes of neural network. But still, some open research questions arise.
\begin{itemize}
    \item[Question-1] \textit{How the neighborhood size $``M"$ contributes in asymptotic convergence of \cite{zhan2024neural}'s graphical neural network?}
    \item[Question-2] \textit{\cite{chen2020deepkriging}'s neural network considering \cite{nychka2015multiresolution}'s multi-resolution radial basis in the input nodes and \cite{nag2023spatio}'s long short time memory (LSTM) extension ignore the fact that for every spatial location the basis coefficients are different. As a result, it's not clear how they consider spatial random effects via neural networks.}
    \item[Question-3] \textit{\cite{xiao2023nonparametric} only considers the spatial random effect of response and covariate space are linearly additive ignoring the nonlinear strength of the neural network. Here how this network is convergent is unexplained.}
\end{itemize}
In this research, we have sorted these three problems of the previous research. We have proposed a nonlinear spatial neural network for spatial data which is a generalization of \cite{xiao2023nonparametric}'s spatial regression model. We have generalized \cite{shen2023asymptotic}'s single-layer sieve neural network into the deep neural network by increasing the width and depth. We explain how convergence is achieved by increasing neighborhood size under some regularity conditions with a faster convergence rate than that of \cite{zhan2024neural}. To overcome the shortcomings of considering spatial random effect in ``Deep Kriging" of \cite{chen2020deepkriging} we introduce a localized deep neural network. 

In Section~(\ref{sec:local}) we have described the spatial sampling design that is required for the inference of localized deep neural network and then we have discussed the details of the algorithm of localized deep neural network in spatial regression. In Section~(\ref{sec:asym}) we have discussed the asymptotic convergence, convergence rate, and central limit theorem of the weighted sum of localized deep neural network function in the mixed increasing domain under fixed sampling design. In Section~(\ref{sec:unc}) we explain the confidence interval of localized deep neural network functional, asymptotic behavior of KL divergence. In Section~(\ref{sec:result}) we empirically validate our theoretical results by different simulated data and one monthly average temperature data of major cities of the USA and satellite image.
\section{Methodology}
\subsection{Localized DNN}\label{sec:local}
Nowadays deep neural networks are famous for their approximation quality. For spatial framework, it's very common to assume the nodes in the location and then perform a neural network modeling. In this section, we will study a localized neural network that will apprehend the spatial dynamicity along with the relationship between response and covariates. This model is a potential generalization of spatial regression. This section is partitioned into two parts. The first part will explain how to import the spatial dependence and the second part will give an overview of how we can execute the algorithm. 
\subsubsection{Spatial Sampling Regions}
Let's consider an open connected set of $(-\frac{1}{2}, \frac{1}{2}]^{d}$ is $\mathcal{R}_{0}^{*}$ and a prototype Borel set $\mathcal{R}_0$ satisfying $\mathcal{R}_{0}^{*} \subset \mathcal{R}_0 \subset \text{Clo.}\mathcal{R}_0^{*}$. Let $\{\lambda_n\}_{n \in \mathbb{N}}$ be a sequence of non-decreasing positive real numbers ($\lambda_n = o(n)$) such that $\lambda_n \to \infty$ as $n \to \infty$. The sampling region $\mathcal{R}_n$ is achieved by `inflating' the prototype set $\mathcal{R}_0$ by a scaling factor $\lambda_n$
\[
\mathcal{R}_n = \lambda_n \mathcal{R}_0.
\]
As a result, the sampling region $\mathcal{R}_n$ becomes unbounded. Now for a mixed increasing spatial sampling design like \cite{lahiri2003central} we assume a diagonal matrix $\Delta = \text{diag}\left(e_1, e_2, \cdot \cdot \cdot, e_d\right)$ and $\mathcal{Z}^{d} = \{\Delta \mathbf{i}: \mathbf{i} \in \mathbb{Z}^{d}\}$ where $e_i \in (0, \infty)$ is an increment in the $i^{th}$ direction. Let $\{\eta_n\}_{n \in \mathbb{N}}$ be a sequence of decreasing positive real numbers ($\eta_n = o(\lambda_n^{-d})$) such that $\eta_n \to 0$ as $n \to \infty$. We consider the sampling sites $\{\mathbf{s}_1, \mathbf{s}_2, \cdot \cdot \cdot, \mathbf{s}_{N_{n}}\}$ under mixed-increasing domain case are given by the points
\begin{equation}\label{eq:sampling_point}
    \left\{\mathbf{s}_1, \mathbf{s}_2, \cdot \cdot \cdot, \mathbf{s}_{N_{n}}\right\} \equiv \{\mathbf{s} \in \eta_n \mathcal{Z}^{d}: \mathbf{s} \in \mathcal{R}_n\}.
\end{equation}
In (\ref{eq:sampling_point}) the sampling points of the scaled lattice $\eta_n \mathcal{Z}^{d}$ lie in the sampling region $\mathcal{R}_n$ and with increasing $n$ the lattice becomes finer, inflated (unbounded) and makes the region $\mathcal{R}_n$ dense. The maximum separation distance between two adjacent vertices of a lattice rectangle is bounded by $\eta_n\;\equiv \text{max}\{e_1, e_2,\cdot \cdot \cdot, e_d\} \to 0$ as $n \to \infty$. As a result, the sample size $n$ will satisfy the growth condition 
\begin{equation*}
    \lvert \Delta^{-1} \mathcal{R}_0 \rvert \left(\frac{\lambda_n}{\eta_n}\right)^{d} \sim N_n
\end{equation*}
of a larger order than the volume of $\mathcal{R}_n$. Next, we consider an increasing sequence of positive real-numbers $\{\delta_n\}_{n \in \mathbb{N}}$ such that $\delta_n = o(\log n)$ and $\delta_n \to \infty$ as $n \to \infty$. Let $\mathcal{N}_n(\mathbf{s}_i, \delta_n, \eta_n)$ is the neighborhood centered at $\mathbf{s}_i$ with radius $\delta_n$ like \cite{cressie2015statistics} which will satisfy
\[
\mathcal{N}_n(\mathbf{s}_i, \delta_n, \eta_n) \subset \mathcal{N}_{n+1}(\mathbf{s}_i, \delta_{n+1}, \eta_n).
\]
Assume $\{\Gamma_n\}_{n \in \mathbb{N}}$ is an increasing sequence of positive integers satisfying $\Gamma_n = \mathcal{O}(\lambda_n)$ that denotes the cardinality of spatial locations included in the neighborhood. As $n \to \infty$ the lattice spacing $\eta_n \to 0$ and, the cardinality, $\Gamma_n$, is going to $\infty$. Under mixed increasing sampling design the sampling region and neighborhood of every location have two properties: the neighborhoods of every location will become denser which incorporates the ``infill'' property of the spatial domain and the sampling region will be inflated with increasing $N_n$ which incorporates the ``increasing domain'' property of spatial domain. In the infill spatial neighborhood, we study the asymptotic behavior of a two-layer deep neural network (DNN) estimator. 

\subsubsection{Spatial DNN}
We assume that $\left\{Y(\mathbf{s}): \mathbf{s} \in \mathcal{R}_n\right\}$ and $\left\{\mathcal{X}(\mathbf{s}) \subset \mathbb{R}^{p}: \mathbf{s} \in \mathcal{R}_n \right\}$ are $(p+1)$ dimensional random fields (rf) on the bounded region $\mathcal{R}_n \subset \mathbb{R}^{d}$. Let $Y(\mathbf{s})$ and $\mathcal{X}(s)$  are respectively the spatial surface of response and predictor variables. In the spatial autoregression (SAR) model we consider 
\begin{equation}\label{eq:sar}
    Y(\mathbf{s}) = \rho W Y + \mathcal{X}(\mathbf{s}) \beta + \epsilon.
\end{equation}
In (\ref{eq:sar}) the dependent variable $Y \equiv Y(\mathbf{s}) \in \mathbb{R}^{N_n}$, the independent variables $\mathcal{X} \equiv \mathcal{X}(\mathbf{s}) \in \mathbb{R}^{N_n \times p}$. The spatial weight matrix $W \in \mathbb{R}^{N_n \times N_n}$ and the error $\epsilon(\cdot)$ is a white noise process with variance $\sigma^{2}$. In this article, we extend the SAR model in a non-linear framework. We assume there is a nonlinear regressive behavior in $Y$ for every spatial location such that 
\begin{equation}\label{eq:sieve_sar}
    \begin{split}
        &Y_t(\mathbf{s}_i) = f_0 \left( Y_t(\mathbf{s}_j), \mathcal{X}_t(\mathbf{s}_i), \mathcal{X}_t(\mathbf{s}_j) \right) + \epsilon_t(\mathbf{s}_i),\\
        &t=1,2, \cdot \cdot \cdot,n;\;\mathbf{s}_j \in \mathcal{N}_{\Gamma_n}(\mathbf{s}_i, \delta_n, \eta_n) \equiv \mathcal{N}_{n}(\mathbf{s}_i, \delta_n, \eta_n) \subset \mathcal{R}_n.
    \end{split}
\end{equation}
Here in (\ref{eq:sieve_sar}), we assume that the unknown function $f_0 \in \mathcal{F}$ such that,
\[
\mathcal{F} = \left\{f_2 \circ f_1: f_i \in \mathcal{M}_i^{\alpha_i, \Tilde{\beta}}\right\}
\]
where, $\mathcal{M}_i^{\alpha_i, \Tilde{\beta}}$ is the isotropic $\Tilde{\beta}$ Holder class of function $f_i$\footnotemark[1]. 
\footnotetext[1]{A function $f_i: \mathbb{R}^{d} \to \mathbb{R}$ will be from isotropic $\Tilde{\beta}$ Holder class $\mathcal{M}_i^{\alpha_i, \Tilde{\beta}}$ if there exists a constant $\mathcal{M}_i > 1;\; \alpha_i \in (0,1)$ such that 
\[
\lvert D^{\Tilde{\beta}} f_i(\mathbf{x}) - D^{\Tilde{\beta}} f_i(\mathbf{y}) \rvert \leq M_n\lvert\lvert \mathbf{x} - \mathbf{y} \rvert\rvert_2^{\alpha_i}
\]
where, $D^{\Tilde{\beta}} f(\mathbf{x})$ is the $\Tilde{\beta}$-th order partial derivative of $f_i(\mathbf{x})$.} In (\ref{eq:sieve_sar}) $Y_t(\mathbf{s}_i), \mathcal{X}_t(\mathbf{s}_i)$ are $n$ independent and identically distributed (iid) copies of $Y(\mathbf{s}_i)\in \mathbb{R}^{N_n}$ and $\mathcal{X}(\mathbf{s}_i) \in \mathbb{R}^{N_n \times p}$. Under infill asymptotic of fixed sampling design the nonlinear unknown function $f_{0}$ will minimize the population criterion function
\begin{equation}\label{eq:pop_crit}
    \begin{split}
        \mathcal{L}_{n}(f) &= \left[ \frac{1}{n} \sum_{t=1}^{n} \left\{Y_t(\mathbf{s}_i) - f(Y_t(\mathcal{N}_{\Gamma_n}(\mathbf{s}_i, \delta_n, \eta_n)), \mathcal{X}_t(\mathcal{N}_{\Gamma_n}(\mathbf{s}_i, \delta_n, \eta_n)))\right\}^{2}\right]\\
        &=  \frac{1}{n} \sum_{t=1}^{n}\left\{f(Y_t(\mathcal{N}_{\Gamma_n}(\mathbf{s}_i, \delta_n, \eta_n)), \mathcal{X}_t(\mathcal{N}_{n}(\mathbf{s}_i, \delta_n))) \right.\\
        &\quad\left. - f_0(Y_t(\mathcal{N}_{\Gamma_n}(\mathbf{s}_i, \delta_n, \eta_n)), \mathcal{X}_t(\mathcal{N}_{\Gamma_n}(\mathbf{s}_i, \delta_n, \eta_n))) \right\}^{2} + \sigma^{2}.
    \end{split}
\end{equation}
We can obtain a least square estimator of the regression function (\ref{eq:pop_crit}) by minimizing the empirical squared error loss $\mathbb{L}_{n}(f)$:
\begin{equation}\label{eq:emp_loss}
    \begin{split}
        \hat{f}_{n}(\mathbf{s}_i) &= \text{argmin}_{f \in \mathcal{F}} \mathbb{L}_{n}(f)\\
        &= \text{argmin}_{f \in \mathcal{F}} \frac{1}{n} \sum_{t=1}^{n} \left\{Y_t(\mathbf{s}_i) - f(Y_t(\mathcal{N}_{\Gamma_n}(\mathbf{s}_i, \delta_n, \eta_n)), \mathcal{X}_t(\mathcal{N}_{\Gamma_n}(\mathbf{s}_i, \delta_n, \eta_n)))\right\}^{2}.
    \end{split}
\end{equation}
The least-square estimator in (\ref{eq:emp_loss}) over universal $\mathcal{F}$ may face some undesired properties for example consistency. Therefore, optimizing the squared error loss over some less complex functional space $\mathcal{F}_{n}$ will be better. 
\begin{ass}\label{ass:rkhs}
    The functional space $\mathcal{F}_{n}$ with VC dimension $\nu$ and the functionals belonging to this space are square-integrable (one choice might be Reproducing Kernel Hilbert Space).  
\end{ass}
Here the function space $\mathcal{F}_{n}$ in Assumption.~\ref{ass:rkhs} is an approximation of $\mathcal{F}$ and this approximation error tends to $0$ with increasing sample size in $\mathcal{N}_{\Gamma_n}(\mathbf{s}_i, \delta_n, \eta_n)$. We consider a sequence of non-decreasing classes of functions
\[
\mathcal{F}_{1} \subset \mathcal{F}_{2} \subset \cdot \cdot \cdot \subset \mathcal{F}_{n}\subset \mathcal{F}_{n+1} \subset \cdot \cdot \cdot \subset \mathcal{F}
\]
where $\bigcup_{n\geq 1} \mathcal{F}_{n}$ is dense in $\mathcal{F}$. Moreover for any $f\in \mathcal{F}$ there exists a projection $\pi_{n}f \in \mathcal{F}_n$ such that some pseudometric defined in $\mathcal{F}$ i.e. $d(f,\pi_{n}f) \to 0$ where as $n \to \infty$. This type of function is defined as a 2-DNN according to \cite{grenander1981abstract}. The DNN estimator $\hat{f}_{n}$ would satisfy
\[
\mathbb{L}_{n}(\hat{f}_{n}) \leq \text{Inf}_{f \in \mathcal{F}_n} \mathbb{L}_{n}(f) + \mathcal{O}_{P}(\Xi_n)
\]
with $lim_{n \to \infty} \Xi_n = 0$. We will describe the localized DNN with two hidden layers and a \texttt{tanh} activation function belonging to the isotropic holder class\footnotemark[1]. We assume a two-layer DNN with $r$ hidden nodes:
\begin{equation}\label{eq:2_SNN}
    \begin{split}
        &\mathcal{F}_{r,n} \equiv \left\{f_2 \circ f_1: f_{1} \in \mathcal{F}_{r}^{(1)} \subset \mathcal{M}_1^{\alpha_1, \Tilde{\beta}}\; \& \;f_{2} \in \mathcal{F}_{r}^{(2)} \subset \mathcal{M}_2^{\alpha_2, \Tilde{\beta}}\right\}\\
        &\equiv \left\{ \vartheta_0 + \sum_{i=1}^{r} \vartheta_i \texttt{tanh}\Bigg(\sum_{j=1}^{r} \nu_{ij}\texttt{tanh}\Bigg(\sum_{k=1}^{p}\sum_{l=1}^{\Gamma_n} \upsilon_{ijkl} X_k(\mathbf{s}_{\sigma_l}) + \right.\\
        &\left. \sum_{l=1}^{\Gamma_n}\upsilon_{ij(p+1)l} Y(\mathbf{s}_{\sigma_l}) + \upsilon_{ij00}\Bigg) + \nu_{i0}\Bigg):
        \right.\\
        &\left.
       \sum_{i=0}^{r} \lvert \vartheta_i\rvert \leq V_2;\sum_{j=0}^{r} \lvert \nu_{ij}\rvert \leq V_2 \;\text{max}_{\substack{1\leq i \leq r \\ 1\leq j \leq r}} \sum_{k=0}^{p+1}\sum_{l=1}^{\Gamma_n} \lvert \upsilon_{ijkl} \rvert \leq V_1 \text{for some}\; V_1 > 1 \right\};\\
       &\text{for some}\; \text{min}\{V_1, V_2\} > 1; \vartheta_i, \nu_{ij}, \upsilon_{ijkl} \in \mathbb{R};\\
       &\;i,j= 0,\cdot \cdot \cdot, r;k= 0,\cdot \cdot \cdot,(p+1);\;l=1,2,\cdot \cdot \cdot, \Gamma_n;\; \sigma_l\in \{1,2,\cdot \cdot \cdot, n\}.
    \end{split}
\end{equation}
Assume $\lvert \lvert Y_i\rvert \rvert_{\infty} \leq \text{min}\{V_1, V_2\} \Rightarrow \lvert \lvert f_{0}\rvert \rvert_{\infty} \leq \text{min}\{V_1, V_2\}$. In (\ref{eq:2_SNN}) we assume $\left\{\mathbf{s}_{\sigma_1},\mathbf{s}_{\sigma_2}, \cdot \cdot \cdot, \mathbf{s}_{\sigma_{\Gamma_n}}\right\} \in \mathcal{N}_{\Gamma_n}(\mathbf{s}_i, \delta_n, \eta_n)$.
\begin{figure}
    \centering
    \includegraphics[width=\linewidth]{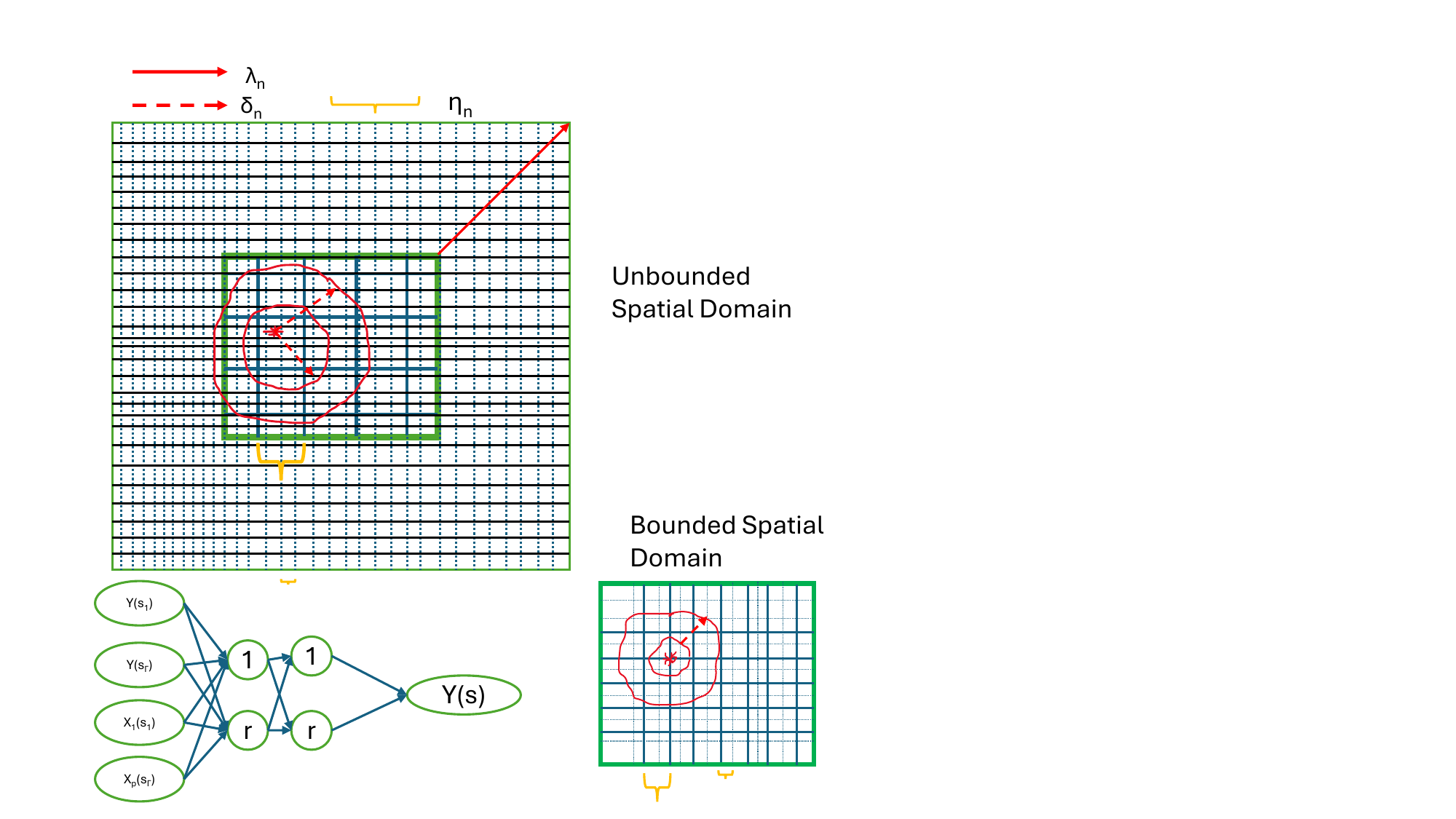}
    \caption{Flowchart of Localized DNN Estimator}
    \label{fig:neighbor}
\end{figure}
In Figure~\ref{fig:neighbor} we have illustrated the workflow of a two-layer DNN-based spatial regression model. For unbounded regions corresponding to every target location we create a spatial neighborhood of radius $\delta_n$ and consider those locations included in this spatial neighborhood. We feed the response and covariates of those spatial locations in the input node of a DNN having two hidden layers. Here to avoid high-dimensional problems we also give one constraint that $(p+1)\Gamma_n = o(\text{max}\{N_n, n\})$. This restriction is crucial for bounded and unbounded spatial domains. However, for the bounded domain case, it will be equivalent to the purely ``infill'' spatial subsampling. In Algorithm~(\ref{alg:dnn}) we have explained the algorithm 2-DNN based spatial regression model. From this Algorithm~(\ref{alg:dnn}) it's clearly understandable that this model is capturing spatial dynamicity at the local level by finer grid spacing concept of multi-resolution and overall dynamicity by the sieve class of nonlinear functionals. 
\begin{algorithm}
\caption{2-DNN Spatial Auto Regression}
\begin{algorithmic}[1]
\small
\State Spatial sampling region, $R_n = \lambda_n \cdot R_0$.
\State Initial Grid Spacing in Lattice, $\eta_n \gets \frac{1}{\lambda_n + d}$.
\For{l in 1:L}
    \State Making finer grid $\eta_{nl} \gets \frac{\eta_n}{2^{l}}$.
    \For{ each $\delta_n$}
    \State Create Neighborhood $\mathcal{N}_{\Gamma_n}(\mathbf{s}_i, \delta_n, \eta_n)$ and store $Y, \mathcal{X}$ in $\mathcal{Y}_t(\mathbf{s}_i,\mathcal{N}_{\Gamma_n}).$
    \For{i in $1: N_n$}
    \State Define 2-DNN spatial regression model with \texttt{tanh} activation:
    \begin{equation*}
        \begin{split}
           Y_i &= f_2(W_2 f_1(W_1 \mathcal{Y}_t(\mathbf{s}_i,\mathcal{N}_{\Gamma_n}) + b_1) + b_2),\\
           h_i &= f(W_i h_{i-1} + b_i),\; 
      \tanh(x) = \frac{e^x - e^{-x}}{e^x + e^{-x}}.   
        \end{split}
    \end{equation*}
% \State Add dropout for regularization to prevent overfitting
% \[
% h_k^{\text{dropout}} = h_k \odot d_k,\;k=1,2.
% \]
\For{epoch = 1 to num\_epochs}
    \State Shuffle the training data.
    \For{b in 1:B}
        \State Compute loss for minibatch.
       \[\mathbb{L}_{nb}= \frac{1}{n} \sum_{t=1}^{n} \left\{Y_t(\mathbf{s}_i) - f(\mathcal{Y}_t(\mathbf{s}_i,\mathcal{N}_{\Gamma_n}))\right\}^{2}.
       \]
    \State Compute gradients for minibatch using backpropagation.
       
\[
\frac{\partial\; \mathbb{L}_{nb}}{\partial w} = \frac{1}{n} \sum_{i=1}^n (Y_i - \hat{Y}_i) \cdot \frac{\partial \hat{Y}_i}{\partial w}
\]
 
    \State Update model parameters using Adam optimizer.
        \begin{align*}
m^{*}_p &= \beta^{*}_1 m^{*}_{p-1} + (1 - \beta^{*}_1) g^{*}_p; 
v^{*}_p = \beta^{*}_2 v^{*}_{p-1} + (1 - \beta^{*}_2) g^{*2}_p \\
\hat{m}^{*}_p &= \frac{m^{*}_p}{1 - \beta^{*p}_1}; 
\hat{v}^{*}_p = \frac{v_p^{*}}{1 - \beta^{*p}_2} ;
\theta^{*}_{p+1} = \theta^{*}_p - \frac{\alpha^{*} \hat{m}^{*}_p}{\sqrt{\hat{v}^{*}_p} + \epsilon^{*}}
\end{align*}
 \State Update learning rate according to necessity.
    \EndFor
    \State Evaluate the model's performance on the validation data set.
    \EndFor
\EndFor
    \EndFor
\EndFor

\end{algorithmic}
\label{alg:dnn}
\end{algorithm}

%%%%%%%%%%%%%%%%%%%%%%%%%%%%%%%%%%%%%%%%%%%%%%%%%%%%%%%%%%%%%%%%%%%%
\subsection{Asymptotic Analysis}\label{sec:asym}
This section is devoted to discussing the asymptotic behavior of a two-layer DNN (2-DNN) estimator. This section is divided into two parts. The first part will study the existence, consistency, and convergence rate of a 2-DNN. 
\subsubsection{Asymptotic Convergence}
 The first theorem will prove the existence of this 2-DNN estimator. This theorem is motivated by Theorem 2.2 of \cite{white1991some} and an extension of Corollary 2.1 of \cite{shen2023asymptotic}. Here the consistency of 2-DNN is different from \cite{shen2023asymptotic}'s consistency of single-layer sieve neural network (SNN) because \cite{shen2023asymptotic} has achieved the consistency for single-layer DNN with \texttt{sigmoid} activation function. A single-layer DNN is not very effective in dealing with complex spatial problems. But here we propose the result for 2-DNN  which is deep and helpful for real data. Here the rate of increase of input nodes and nodes in the hidden layer is slower than the rate of inflation, $\lambda_n$ of the spatial sampling region. 
\begin{theorem}\label{thm:existance}[Existence]
    Let $(\Omega, \mathcal{A}, P)$ be a complete probability space and let $(\mathcal{F}, \rho)$ be a pseudo-metric. Let $\{\mathcal{F}_{r,n}\}$ be a sequence of compact subsets of $\mathcal{F}$. Let $\mathbb{L}_{n}:\Omega \times \mathcal{F}_{r,n} \to \Bar{\mathbb{R}}$ be $\mathcal{A} \otimes \mathcal{B}(\mathcal{F}_{r,n})/ \mathcal{B}(\Bar{\mathbb{R}})$ measurable and suppose for every $\omega \in \Omega$ if $\mathbb{L}_{n}(\omega,\cdot)$ is lower semicontinuous on $\Theta_{n}$. Then for every $n = 1,2, \cdot \cdot \cdot$ there exists a 2-DNN estimator $\hat{f}_{n}: \Omega \to \mathcal{F}_{r,n}$ which is $\mathcal{A}/\mathcal{B}(\mathcal{F}_{r,n})$ measurable such that $\mathbb{L}_{n}(\hat{f}_{n}(\omega)) = \text{Inf}_{f \in \mathcal{F}_{r,n}} \mathbb{L}(f)$.
\end{theorem}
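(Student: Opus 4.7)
The plan is to treat this as a two-step measurable-minimum argument in the spirit of Theorem~2.2 of \cite{white1991some}: first establish pointwise existence of a minimizer by a Weierstrass-type compactness argument, then upgrade to a measurable selection via a Kuratowski--Ryll-Nardzewski-style theorem. Concretely, for each fixed $\omega \in \Omega$ the map $f \mapsto \mathbb{L}_n(\omega, f)$ is lower semicontinuous on the compact pseudo-metric space $(\mathcal{F}_{r,n}, \rho)$, so the infimum $m_n(\omega) := \inf_{f \in \mathcal{F}_{r,n}} \mathbb{L}_n(\omega, f)$ is finite and attained. The set-valued map
\[
\Psi_n(\omega) := \bigl\{ f \in \mathcal{F}_{r,n} : \mathbb{L}_n(\omega, f) = m_n(\omega) \bigr\}
\]
is therefore non-empty and closed in $\mathcal{F}_{r,n}$ for every $\omega$.

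Next I would verify that $m_n : \Omega \to \bar{\mathbb{R}}$ is $\mathcal{A}$-measurable. Because $(\mathcal{F}_{r,n}, \rho)$ is compact it is separable, so fix a countable dense subset $\{f_k\}_{k \geq 1} \subset \mathcal{F}_{r,n}$. By lower semicontinuity in $f$ one has
\[
m_n(\omega) = \inf_{k \geq 1} \mathbb{L}_n(\omega, f_k),
\]
and the right-hand side is a countable infimum of $\mathcal{A}$-measurable functions (measurability of each $\omega \mapsto \mathbb{L}_n(\omega, f_k)$ follows from the joint $\mathcal{A} \otimes \mathcal{B}(\mathcal{F}_{r,n})/\mathcal{B}(\bar{\mathbb{R}})$ measurability hypothesis by fixing the second coordinate). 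With $m_n$ measurable, the graph of $\Psi_n$,
\[
\operatorname{Gr}(\Psi_n) = \bigl\{ (\omega, f) \in \Omega \times \mathcal{F}_{r,n} : \mathbb{L}_n(\omega, f) - m_n(\omega) \leq 0 \bigr\},
\]
is $\mathcal{A} \otimes \mathcal{B}(\mathcal{F}_{r,n})$-measurable.

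Having produced a non-empty, closed-valued, measurable-graph correspondence $\Psi_n$ on a complete probability space into a compact metric space, I would invoke the Kuratowski--Ryll-Nardzewski measurable selection theorem (or equivalently the Aumann selection theorem for complete $\sigma$-fields) to obtain an $\mathcal{A}/\mathcal{B}(\mathcal{F}_{r,n})$-measurable map $\hat{f}_n : \Omega \to \mathcal{F}_{r,n}$ with $\hat{f}_n(\omega) \in \Psi_n(\omega)$ for every $\omega$. This $\hat{f}_n$ is the desired $2$-DNN estimator, as it satisfies $\mathbb{L}_n(\hat{f}_n(\omega)) = \inf_{f \in \mathcal{F}_{r,n}} \mathbb{L}_n(\omega, f)$ by construction.

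The main obstacle I anticipate is the fact that $(\mathcal{F}_{r,n}, \rho)$ is only a \emph{pseudo}-metric space, not a genuine metric space: the dense-sequence argument and the measurable selection theorem both require some separability/Polish-type structure. I would handle this by passing to the metric quotient $\mathcal{F}_{r,n}/\sim$ where $f \sim g$ iff $\rho(f,g) = 0$, verifying that $\mathbb{L}_n$ descends to a well-defined lower semicontinuous functional on the quotient (which holds because two $\rho$-equivalent elements of the $2$-DNN class~\eqref{eq:2_SNN} induce the same squared-error loss on the $n$ training copies under the assumptions on $V_1, V_2$), applying the selection theorem there, and lifting back. The remaining checks --- joint measurability of $\mathbb{L}_n$, compactness of $\mathcal{F}_{r,n}$ under $\rho$ (which follows from the boundedness constraints $\sum|\vartheta_i|, \sum|\nu_{ij}| \leq V_2$ and $\sum |\upsilon_{ijkl}| \leq V_1$ together with continuity of $\tanh$), and lower semicontinuity of $f \mapsto \mathbb{L}_n(\omega, f)$ on this compact parameter set --- are routine consequences of the architecture in~\eqref{eq:2_SNN} and the finiteness of the sum defining $\mathbb{L}_n$ in~\eqref{eq:emp_loss}.
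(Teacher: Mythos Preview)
Your proposal is correct and gives a self-contained proof of the abstract measurable-minimum statement via measurable selection. The paper takes a different, more concrete route: rather than proving the abstract theorem directly, it verifies the three hypotheses of Theorem~2.2 in \cite{white1991some} for the specific 2-DNN architecture. In particular, the paper's Step~3 does not argue compactness of $\mathcal{F}_{r,n}$ abstractly but instead exhibits the finite-dimensional parametrization $h:\Theta_n\to\mathcal{F}_{r,n}$ from the compact box $\Theta_n\subset[-V_1,V_1]^{r+1}\times[-V_2,V_2]^{(r+1)^2}\times[-V_1,V_1]^{r^2(p+2)(\Gamma_n+1)}$ and computes an explicit Lipschitz constant
\[
V^{*}=\bigl(V_1 V_2\{(r+1)(p+1)\Gamma_n+(p+1)\Gamma_n\}\bigr)^{2}
\]
so that $\|h(\theta_{1n})-h(\theta_{2n})\|_n^2\le V^{*}\|\theta_{1n}-\theta_{2n}\|_n^2$. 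Your abstract argument buys generality and avoids any architecture-specific computation; the paper's argument buys a concrete Lipschitz bound tied to $r,p,\Gamma_n,V_1,V_2$, and by working through the Euclidean parameter space it sidesteps the pseudo-metric quotient issue you flagged. The two approaches are complementary: yours proves the theorem as stated, the paper's shows why the 2-DNN class actually fits into it.
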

\begin{proof}
    Pleas see the proof in Appendix~(\ref{sec:pf_ex})
\end{proof}
Theorem~\ref{thm:existance} identifies the condition of existence of 2-DNN. The following theorem will justify the convergence of this two-layer DNN.
\begin{theorem}\label{thm:convergence}[Convergence]
\begin{itemize}
    \item[A2.1] If $\Gamma_n = o\left(n^{\psi}\right), \lambda_n = o(n), r = o\left(n^{\beta}\right),V_2\sqrt{\sigma^{2} + 1} < < r^{2}, V_1 V_2 = o(\sqrt{\log n})$ for some $0< \psi + \beta < 1$.
    \item[A2.2] The functional class $\mathcal{F}_{r,n}$ with Vapnik-Chervonenkis (VC) dimension $\nu$ with bounded envelope function $\Tilde{g}$. This bounded envelop function, $\Tilde{g}$ is uniformly bounded by min$\{V_1, V_2\}$.
    \item[A2.3] The Covering number that grows at a controlled rate such that
    \[
    \log \mathcal{N}\left(\varphi, \mathcal{F}_{r,n}, \lvert \lvert \cdot \rvert \rvert \right) \leq \nu C \varphi^{-d},
    \]
    for some $C > 0$ then, 
\end{itemize}
    \begin{equation*}
        \lvert \lvert \hat{f_{n}} - f_{0} \rvert \rvert_{n} \to^{P} 0\;\; \text{as}\;\; n\to \infty
    \end{equation*}
\end{theorem}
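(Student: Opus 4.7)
The plan is to follow the classical sieve $M$-estimator route: decompose $\|\hat{f}_n - f_0\|_n^2$ into an approximation error against the sieve projection $\pi_n f_0 \in \mathcal{F}_{r,n}$ plus a stochastic cross-term, then show both vanish under A2.1--A2.3. Writing $\mathbf{Z}_t = (Y_t(\mathcal{N}_{\Gamma_n}(\mathbf{s}_i,\delta_n,\eta_n)),\mathcal{X}_t(\mathcal{N}_{\Gamma_n}(\mathbf{s}_i,\delta_n,\eta_n)))$ and $Y_t(\mathbf{s}_i)=f_0(\mathbf{Z}_t)+\epsilon_t(\mathbf{s}_i)$, an expansion of the empirical loss $\mathbb{L}_n(f)$ from (\ref{eq:emp_loss}) yields
\begin{equation*}
\mathbb{L}_n(f) \;=\; \|f-f_0\|_n^2 \;-\; \tfrac{2}{n}\sum_{t=1}^n \epsilon_t(\mathbf{s}_i)\bigl(f(\mathbf{Z}_t)-f_0(\mathbf{Z}_t)\bigr) \;+\; \tfrac{1}{n}\sum_{t=1}^n \epsilon_t^2(\mathbf{s}_i).
\end{equation*}
Using the sieve near-optimality $\mathbb{L}_n(\hat{f}_n) \le \mathbb{L}_n(\pi_n f_0) + \mathcal{O}_P(\Xi_n)$ together with $\pi_n f_0 \in \mathcal{F}_{r,n}$, rearrangement yields
\begin{equation*}
\|\hat{f}_n-f_0\|_n^2 \;\le\; \|\pi_n f_0 - f_0\|_n^2 \;+\; 2\sup_{f\in\mathcal{F}_{r,n}}\Bigl|\tfrac{1}{n}\sum_{t=1}^n \epsilon_t(\mathbf{s}_i)\bigl(f(\mathbf{Z}_t)-f_0(\mathbf{Z}_t)\bigr)\Bigr| \;+\; \mathcal{O}_P(\Xi_n).
\end{equation*}
So it suffices to show the approximation term and the supremum both go to $0$ in probability.

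For the approximation error, I would invoke the universal approximation property of two-hidden-layer $\tanh$ networks against the composition class $\mathcal{F} = \{f_2\circ f_1:f_i\in\mathcal{M}_i^{\alpha_i,\tilde\beta}\}$. Since each $f_i$ lies in an isotropic $\tilde\beta$-Hölder class and the tangent hyperbolic function itself is analytic, classical approximation bounds (e.g.\ Mhaskar-type or the Hornik density theorem combined with quantitative Hölder approximation) give $\|\pi_n f_0 - f_0\|_\infty \lesssim r^{-\gamma(\tilde\beta,d,p\Gamma_n)}$ for an exponent $\gamma>0$, provided the network width $r$ grows and the weight bounds $V_1, V_2$ are large enough to host the approximant. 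A2.1 supplies exactly these conditions: $r=o(n^\beta)\to\infty$, $V_1V_2=o(\sqrt{\log n})\to\infty$, and $(p+1)\Gamma_n = o(n)$, so the approximation error converges to $0$ deterministically (and hence a fortiori under the empirical norm, after an additional argument bounding $\|\cdot\|_n$ by $\|\cdot\|_\infty$ using the envelope in A2.2).

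For the stochastic term, I would apply a uniform maximal inequality to the empirical process indexed by $\mathcal{F}_{r,n}$. A2.2 provides a uniformly bounded envelope $\tilde{g}\le \min\{V_1,V_2\}$, and A2.3 provides the entropy bound $\log \mathcal{N}(\varphi,\mathcal{F}_{r,n},\|\cdot\|)\le \nu C \varphi^{-d}$. Symmetrization with iid Rademacher multipliers $\xi_t$ together with Dudley's chaining bound gives
\begin{equation*}
\mathbb{E}\Bigl[\sup_{f\in\mathcal{F}_{r,n}}\Bigl|\tfrac{1}{n}\sum_{t=1}^n \epsilon_t(\mathbf{s}_i)\bigl(f(\mathbf{Z}_t)-f_0(\mathbf{Z}_t)\bigr)\Bigr|\Bigr] \;\lesssim\; \tfrac{\sigma V_1 V_2}{\sqrt{n}}\int_0^{c V_1 V_2}\sqrt{\nu C\,\varphi^{-d}}\,d\varphi,
\end{equation*}
which under A2.1 decays like $(V_1 V_2)^{1+d/2}\sqrt{\nu/n}=o(1)$ after the usual peeling/truncation argument when the raw chaining integral fails to converge at the origin for $d\ge 2$. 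Combined with a Markov step and the bound $\Xi_n\to 0$, this shows the supremum is $o_P(1)$, yielding $\|\hat{f}_n-f_0\|_n\to^P 0$.

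The main obstacle, as hinted at above, is making the empirical-process step genuinely uniform as the sieve dimension grows. Three quantities are simultaneously moving: the number of input coordinates $(p+1)\Gamma_n$, the number of hidden units $r$, and the weight envelope $V_1V_2$. Balancing them so that the Dudley integral remains $o(\sqrt{n})$ while $\pi_n f_0 \to f_0$ requires the rates in A2.1 to cooperate; in particular, the restrictions $\Gamma_n=o(n^\psi)$, $r=o(n^\beta)$ with $\psi+\beta<1$, and $V_1 V_2=o(\sqrt{\log n})$ are exactly what is needed to keep the effective VC/covering complexity below $n$. A secondary technical point is passing from the empirical norm $\|\cdot\|_n$ based on spatially dependent local inputs to an $L^2$ statement compatible with the iid replicates across $t=1,\ldots,n$; this is handled cleanly because (\ref{eq:sieve_sar}) assumes $n$ iid temporal copies of each local field, so the empirical process theory applies conditionally on the spatial design.
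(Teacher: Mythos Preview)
Your proposal is broadly correct and follows the same sieve $M$-estimation skeleton as the paper: split into an approximation term (handled by universal approximation of the $\tanh$ class) and a stochastic empirical-process term (handled by symmetrization plus chaining). Two differences are worth flagging. First, the paper does not argue via the basic inequality you wrote; it instead proves the uniform convergence $\sup_{f\in\mathcal{F}_{r,n}}|\mathbb{L}_n(f)-\mathcal{L}_n(f)|\to^{P^*}0$ and then invokes Chen (2007, Thm.\ 3.1) / White (1991, Cor.\ 2.6) for consistency, splitting off the $\tfrac{1}{n}\sum_t\epsilon_t^2$ piece by WLLN. Second, and more substantively, the paper's entropy step does \emph{not} use assumption A2.3 at all: it derives a parametric covering-number bound for the two-layer $\tanh$ class directly from Anthony--Bartlett (Lemmas 14.3 and 14.7), obtaining $\log\mathcal{N}(\varphi,\mathcal{F}_{r,n},\|\cdot\|_\infty)\lesssim \bigl(r^2+r((p+1)\Gamma_n+4)+2\bigr)\log(1/\varphi)$, so Dudley's integral converges outright and the growth conditions in A2.1 kill it. You instead plug in A2.3's $\log\mathcal{N}(\varphi)\le\nu C\varphi^{-d}$, whose Dudley integral diverges at the origin for $d\ge 2$; your ``peeling/truncation'' fix can be made to work for mere consistency but is left vague, and the displayed rate $(V_1V_2)^{1+d/2}\sqrt{\nu/n}$ is not what a one-step discretization actually yields. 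The paper's route is cleaner precisely because it exploits the finite-dimensional parametrization of the network class, and that same covering computation is what drives the rate in Theorem~\ref{thm:conv_rate}; your route would not feed into that next step without redoing the entropy calculation anyway.
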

\begin{proof}
    Please see the proof in Appendix~(\ref{sec:pf_conv})
\end{proof}
Theorem~\ref{thm:convergence} is the first theorem that has established the convergence of two-layer DNN after \cite{schmidt2020nonparametric}. But there are two key differences one is we extend DNN in spatial design and the second is our covering number's growth rate is restricted. The next theorem will comment on the convergence rate of this DNN model.
\begin{theorem}\label{thm:conv_rate}
If (A2.1)-(A2.3) from Theorem~\ref{thm:convergence} hold and,
    \begin{equation}
        \begin{split}
             \zeta_n&=\mathcal{O}_{P}\left(min\left\{ \left\lvert\lvert\pi_{r,n} f_{0} - f_{0}\right\rvert\rvert_{n}^{2},\right.\right. \\
            &\left. \left. \frac{\left(r^{2} + r ((p+1)\Gamma_n+4) +2\right)\log\left[\left(r^{2} + r ((p+1)\Gamma_n+4) +2\right)(V_1V_2)^{2}\right]}{n},\right.\right.\\
         &\left.\left. \frac{\left(r^{2} + r ((p+1)\Gamma_n+4) +2\right)\log (n\log n)}{n} \right\}\right)\\  
        \end{split}
    \end{equation}
    
    then, 
    \begin{equation*}
        \begin{aligned}
           \lvert \lvert \hat{f}_n - f_0 \rvert \rvert_{n}
            =\mathcal{O}_{P}&\left\{\text{max}\;\left[\left\lvert\lvert\pi_{r,n} f_{0} - f_{0}\right\rvert\rvert_{n}, 
         \left(\frac{n}{n^{\psi + \beta} \log (n^{\psi + \beta} \log n)}\right)^{-1/2},\right.\right.\\
         &\left.\left.
         \left(\frac{n}{n^{\psi + \beta} \log (n\log n )}\right)^{-1/2}\right]\right\}.            
        \end{aligned}
    \end{equation*}
\end{theorem}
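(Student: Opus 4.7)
The plan is to follow the standard sieve M-estimation paradigm for least-squares regression, decomposing the overall error into an approximation part and a stochastic part, and then bounding the stochastic part by an empirical-process modulus of continuity over the sieve $\mathcal{F}_{r,n}$.

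First, I would write down the basic inequality. By the near-minimization property of $\hat{f}_n$ over $\mathcal{F}_{r,n}$ we have $\mathbb{L}_n(\hat{f}_n) \leq \mathbb{L}_n(\pi_{r,n} f_0) + \mathcal{O}_P(\Xi_n)$. Expanding both sides via $Y_t(\mathbf{s}_i) = f_0(\cdot) + \epsilon_t(\mathbf{s}_i)$ and rearranging yields
\[
\|\hat{f}_n - f_0\|_n^{2} \leq \|\pi_{r,n} f_0 - f_0\|_n^{2} + \frac{2}{n}\sum_{t=1}^{n} \epsilon_t(\mathbf{s}_i)\,[\hat{f}_n - \pi_{r,n} f_0](\mathbf{X}_t) + \mathcal{O}_P(\Xi_n),
\]
where $\mathbf{X}_t$ abbreviates the neighborhood response/covariate block fed into the network. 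The first term on the right is the approximation error that already appears in the statement; the second is the noise cross-term that must be controlled uniformly over the sieve.

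Second, I would localize and apply a Talagrand/Massart empirical-process bound. For a candidate rate $\zeta_n$, form the shell $\mathcal{G}_{\zeta_n} = \{f - \pi_{r,n} f_0 : f \in \mathcal{F}_{r,n},\; \|f - \pi_{r,n} f_0\|_n \leq \zeta_n\}$ and bound
\[
\mathbb{E}^{*}\sup_{g \in \mathcal{G}_{\zeta_n}} \left|\frac{1}{n}\sum_{t=1}^{n} \epsilon_t(\mathbf{s}_i)\, g(\mathbf{X}_t)\right|
\]
via Dudley's entropy integral. Assumption (A2.3) directly supplies a nonparametric covering bound $\log \mathcal{N}(\varphi, \mathcal{F}_{r,n}, \|\cdot\|) \leq \nu C \varphi^{-d}$, while the parametric structure of a two-layer \texttt{tanh} network with weight norms $V_1, V_2$ gives a second covering bound of the form $\log \mathcal{N}(\varphi, \mathcal{F}_{r,n}, \|\cdot\|_\infty) \lesssim P_n \log(V_1 V_2 / \varphi)$, where $P_n = r^{2} + r((p+1)\Gamma_n + 4) + 2$ is the network parameter count. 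Combining these two bounds is what produces the two competing rates inside the minimum defining $\zeta_n$. A standard peeling argument over shells $\{2^{k}\zeta_n \leq \|f - \pi_{r,n} f_0\|_n \leq 2^{k+1}\zeta_n\}$ then converts the modulus-of-continuity estimate into the desired $\mathcal{O}_P(\zeta_n)$ bound on $\|\hat{f}_n - \pi_{r,n} f_0\|_n$.

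Third, I would calibrate $\zeta_n$ by balancing the empirical-process bound against the noise scale, producing
\[
\zeta_n^{2} \asymp \min\!\left\{\|\pi_{r,n} f_0 - f_0\|_n^{2},\; \frac{P_n \log\!\bigl(P_n (V_1 V_2)^{2}\bigr)}{n},\; \frac{P_n \log(n \log n)}{n}\right\}.
\]
Substituting $P_n \lesssim r\cdot(p+1)\Gamma_n \lesssim n^{\psi+\beta}$ from (A2.1), and using $V_1 V_2 = o(\sqrt{\log n})$ to simplify the logarithmic factors, then reproduces the three terms appearing inside the outer maximum of the statement. Combining this estimation rate with the approximation term via the triangle inequality closes out the proof. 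The main obstacle will be the empirical-process step: obtaining a uniform bound on the noise-weighted empirical process that simultaneously exploits the nonparametric entropy in (A2.3) and the parametric architecture of the two-layer sieve, while keeping careful track of how $V_1$, $V_2$, $r$, and $\Gamma_n$ enter the logarithmic prefactor. A secondary difficulty is verifying that the peeling argument still delivers the stated rate when the input dimension $(p+1)\Gamma_n$ itself grows with $n$; this relies critically on the growth condition $(p+1)\Gamma_n = o(\max\{N_n, n\})$ from Section~\ref{sec:local} together with $V_1 V_2 = o(\sqrt{\log n})$ in (A2.1) to prevent the logarithmic penalty from overwhelming the polynomial-in-$n$ sample factor.
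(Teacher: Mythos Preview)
Your overall strategy matches the paper's: both reduce to an entropy-integral rate calibration over the sieve $\mathcal{F}_{r,n}$, followed by the triangle inequality $\|\hat{f}_n - f_0\|_n \leq \|\hat{f}_n - \pi_{r,n} f_0\|_n + \|\pi_{r,n} f_0 - f_0\|_n$. The paper packages the localization/peeling step by directly invoking Theorem~3.4.1 of van der Vaart--Wellner (together with Lemmas~1--2 of \cite{shen2023asymptotic}) after verifying that the entropy integral $\Upsilon_n(\xi) = \int_0^\xi \sqrt{\log \mathcal{N}_\infty(\varphi, \mathcal{F}_{r,n}, \|\cdot\|_\infty)}\,d\varphi$ satisfies the monotonicity condition on $\Upsilon_n(\xi)/\xi^a$ and the calibration $\gamma_n^2 \Upsilon_n(1/\gamma_n) \lesssim \sqrt{n}$; your explicit basic-inequality-plus-peeling plan is the standard unpacking of that theorem and is equivalent.

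There is, however, one concrete misstep in your plan: you attribute the two competing rate terms to \emph{two different} covering bounds, namely the nonparametric bound (A2.3), $\log\mathcal{N}(\varphi,\cdot) \leq \nu C \varphi^{-d}$, and the parametric network bound. The paper does not use (A2.3) at all in the rate argument. Both displayed rates come from the \emph{same} parametric Anthony--Bartlett bound
\[
\log\mathcal{N}_\infty(\varphi,\mathcal{F}_{r,n},\|\cdot\|_\infty) \;\lesssim\; P_n \log\!\Bigl(\tfrac{P_n (V_1V_2)^2}{\varphi}\Bigr),\qquad P_n = r^2 + r\bigl((p+1)\Gamma_n + 4\bigr) + 2.
\]
Splitting that logarithm as $\log\bigl(P_n(V_1V_2)^2\bigr) + \log(1/\varphi)$ and evaluating at $\varphi = 1/\gamma_n$ is precisely what produces the two terms $P_n\log\bigl(P_n(V_1V_2)^2\bigr)/n$ and $P_n\log(\gamma_n)/n \lesssim P_n\log(n\log n)/n$. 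If you actually fed the (A2.3) entropy into Dudley's integral you would obtain a rate of the nonparametric form $n^{-1/(2+d)}$, not anything of the shape $P_n\log(\cdot)/n$, so that route would fail to reproduce the stated theorem. Drop the reference to (A2.3) here and work exclusively with the parametric covering number; the rest of your outline then goes through as the paper's does.
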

\begin{proof}
    Please see the proof in Appendix~(\ref{sec:pf_convrt})
\end{proof}
In Theorem~\ref{thm:conv_rate} we have proved the convergence rate of 2-DNN for every spatial location with its increasing neighborhood ($\mathcal{N}_{\Gamma_n}(\mathbf{s}_i, \delta_n, \eta_n)$) size, $\Gamma_n$. This convergence rate is faster than the first convergence rate of spatial DNN of \cite{zhan2024neural}. 

\subsection{Uncertainity Quantification}\label{sec:unc}
This section discusses the subsampling-based confidence interval of the 2-DNN functional of a specified location. We have already observed that for every location 2-DNN becomes a random vector. Therefore here our main goal is to find $\mathcal{C}_{\alpha} = [L_\alpha, U_\alpha]$. Let's consider $\mathcal{Y}_{\Gamma_j} = y(\mathcal{N}_{\Gamma_j}(\mathbf{s}_i, \delta_n, \eta_n))$ for zero means and for nontrivial mean function $\mathcal{Y}_{\Gamma_j} = \left(y(\mathcal{N}_{\Gamma_j}(\mathbf{s}_i, \delta_j)), \mathcal{X}(\mathcal{N}_{\Gamma_j}(\mathbf{s}_i, \delta_j))\right)$ such that
\[
P( f_0(\mathcal{Y}_{\Gamma_j}) \in \mathcal{C}_{\alpha} ) = 1-\alpha.
\]
To find this $\mathcal{C}_\alpha$ we assume that 
\[
\text{Var}\left(n^{\lambda} \hat{f}_n(\mathcal{Y}_{\Gamma_j}) \right) \to 0;\;\text{as}\; n \to \infty,\; \text{for some}\; \lambda > 0.
\]
We take a subsample of spatial sampling regions at each iteration of size $\Gamma_j$ with corresponding radius $\delta_j$ where $\Gamma_j < \Gamma_{j+1} << \lvert \mathcal{R}_n \rvert \equiv N_n$. Its corresponding neighborhood size, $\delta_j > \delta_{j-1} > \eta_n\;\forall\; j=1,2, \cdot \cdot \cdot, B$. From each sampling block, we fed the sampling points included in this block and got an estimate of the 2-DNN functional corresponding to the location for each $\Gamma_j$ i.e. $\hat{f}_{n,j}$ (please see Figure~\ref{fig:neighbor}). In this manner take $B$ sub-samples and got the estimate $\hat{f}_{n1}(\mathbf{s}_i),\hat{f}_{n2}(\mathbf{s}_i),\cdot \cdot \cdot, \hat{f}_{nB}(\mathbf{s}_i)$. Then the corresponding confidence interval will be 
\begin{equation}\label{eq:ci}
    \mathcal{C}_{\alpha}(\mathbf{s}_i) \equiv \Bar{\hat{f}}_n(\mathbf{s}_i) \pm z_{1-\alpha/2} \Lambda(\mathbf{s}_i).
\end{equation}
In (\ref{eq:ci}) the first term of RHS is $\Bar{\hat{f}}_n(\mathbf{s}_i) \equiv \frac{1}{B} \sum_{j=1}^{B} \hat{f}_{nj}(\mathbf{s}_i)$ and 
\[
\Lambda(\mathbf{s}_i) \equiv \sqrt{\frac{1}{B} \sum_{j=1}^{B} \left(y(\mathbf{s}_i) - \hat{f}_{nj}(\mathbf{s}_i)\right)^{2}}.
\]
This confidence interval in (\ref{eq:ci}) briefs the idea of how the 2-DNN varies w.r.t the sample size included in the neighborhood and the inherited spatial dynamicity in the DNN functional estimate. Next, we will discuss the asymptotic behavior of KL divergence of the empirical distribution of observed and predicted responses with increasing subsample sizes. In (\ref{eq:sieve_sar}) we have already described that 
\[
y_{t}(\mathbf{s}_i) = f_0\left(y_t(\mathbf{s}_{i_1}), \cdot \cdot \cdot, y_t(\mathbf{s}_{i_m}) \right) + \epsilon_t, \; t = 1,2,\cdot \cdot \cdot, n.
\]
Here $\left\{y_1(\mathbf{s}_i), y_2(\mathbf{s}_i), \cdot \cdot \cdot, y_n(\mathbf{s}_i)\right\}$ are iid copies of $y(\mathbf{s}_i)$. According to (\ref{eq:emp_loss}) we know that
\[
\hat{f}_{n,m} = \text{argmin}_{f\in\mathcal{F}} \frac{1}{n} \sum_{t=1}^{n} \left(y_t(\mathbf{s}_i) - f\left(y_t(\mathbf{s}_{i_1}), \cdot \cdot \cdot, y_t(\mathbf{s}_{i_m}) \right)\right)^{2}
\]
and $m(n) \equiv m \equiv \Gamma_n \equiv \mathcal{O}\left(n^{\psi}\right)$ for some $0< \psi < 1$ mentioned in (A2.1) of Theorem~\ref{thm:convergence}. As a consequence of Theorem~\ref{thm:convergence}, it's clear that $\hat{f}_{nm}$ is consistent. In the next Corollary, we will discuss the asymptotic convergence of KL divergence of the empirical distribution of observed and predicted spatial surfaces. Assume the empirical distribution of $y(\mathbf{s}_i)$ and $\hat{y}(\mathbf{s}_i)$ are respectively
\begin{equation*}
    \begin{split}
        \Tilde{p}_{n, y(\mathbf{s}_i)}(y) &= \frac{1}{n} \sum_{t=1}^{n} I\left(y_t(\mathbf{s}_i) \leq y \right),\\
        \hat{\Tilde{p}}_{n,m, \hat{y}(\mathbf{s}_i)}(y) &= \frac{1}{n} \sum_{t=1}^{n} I\left(\hat{y}_{t,m}(\mathbf{s}_i) \leq y \right).\\
    \end{split}
\end{equation*}
\begin{cor}
If $m(n) = n^{\beta}$ for some $\beta \in (0,1)$ and Theorem~\ref{thm:convergence} is valid then 
\[
 KL\left( \Tilde{p}_{n, y(\mathbf{s}_i)}(y) \left\lvert\right\rvert \hat{\Tilde{p}}_{n,m, \hat{y}(\mathbf{s}_i)}(y)  \right) \to 0,\;\text{whenever}\; m(n) \to \infty.
\]
\end{cor}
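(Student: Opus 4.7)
The plan is to lift the mean-square consistency of Theorem~\ref{thm:convergence} to convergence of the induced (kernel-smoothed) empirical distributions, and from there to convergence of the KL divergence via an information-theoretic inequality.

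First, Theorem~\ref{thm:convergence} yields $\lvert\lvert \hat{f}_{n,m}-f_0 \rvert\rvert_n \to^P 0$ under (A2.1)-(A2.3). Since $\{y_t(\mathbf{s}_i)\}_{t=1}^{n}$ are i.i.d.\ copies of $y(\mathbf{s}_i)$, Glivenko--Cantelli gives $\sup_y \lvert \tilde{p}_{n,y}(y) - F_Y(y) \rvert \to 0$ almost surely, where $F_Y$ denotes the CDF of $y(\mathbf{s}_i)$. A parallel argument for the predicted sample $\{\hat{y}_{t,m}(\mathbf{s}_i)\}$, combined with the consistency $\hat{f}_{n,m}\to f_0$ and the decomposition $y_t(\mathbf{s}_i)-\hat{y}_{t,m}(\mathbf{s}_i) = (f_0-\hat{f}_{n,m})(\text{input}_t)+\epsilon_t$, shows that $\hat{\tilde{p}}_{n,m,\hat{y}}$ converges uniformly to the same limit as $\tilde{p}_{n,y}$, once the mean-zero noise term is absorbed through a standard symmetrization step.

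Second, because KL divergence between atomic empirical measures is formally infinite, I would pass to kernel-smoothed densities $g_n$ and $g_{n,m}$ built respectively from $\{y_t(\mathbf{s}_i)\}$ and $\{\hat{y}_{t,m}(\mathbf{s}_i)\}$, with a bandwidth $h_n \downarrow 0$ calibrated to the rate $m = n^{\beta}$. Uniform consistency of the KDE under the envelope bound (A2.2) yields $\sup_y \lvert g_n(y) - g(y) \rvert \to^P 0$ and $\sup_y \lvert g_{n,m}(y) - g(y)\rvert \to^P 0$ on any compact interval, where $g$ is the common limiting density. The chi-squared bound
\[
KL(g_n \,\lvert\lvert\, g_{n,m}) \leq \int \frac{(g_n - g_{n,m})^2}{g_{n,m}}\, dy,
\]
together with the uniform convergence of both density estimates, then delivers the claim.

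The main obstacle will be keeping $g_{n,m}$ bounded away from $0$ on the integration domain. This requires a truncation argument restricting attention to a compact interval that collects all but $o(1)$ of the empirical mass (via the envelope bound in A2.2), a careful bandwidth choice matched to $m = n^{\beta}$ so that KDE consistency holds uniformly on that interval, and a separate tail estimate showing that the KL contribution from the complement is itself $o(1)$.
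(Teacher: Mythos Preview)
Your proposal is considerably more elaborate than the paper's own argument, and it also contains a real gap.

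\textbf{Comparison with the paper.} The paper's proof is a two-line sketch: it asserts that the empirical CDF of the observed responses converges to some limiting distribution $p_\infty(y;\theta_\infty)$, then \emph{postulates} (``WLG assume'') that $P\bigl(\lvert \hat{y}_{t,m}(\mathbf{s}_i)-y(\mathbf{s}_i)\rvert\geq\varepsilon\bigr)\to 0$ as $m\to\infty$, from which it concludes that the empirical CDF of the predictions converges to the same $p_\infty$, and declares the KL convergence to follow. There is no kernel smoothing, no $\chi^2$ bound, no truncation; the paper simply identifies a common limit and stops. Your KDE-plus-$\chi^2$ route is a much more honest attempt to make the KL statement rigorous (since KL between atomic empirical measures is indeed ill-defined), but it is not the approach the paper takes.

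\textbf{The gap in your argument.} Your decomposition
\[
y_t(\mathbf{s}_i)-\hat{y}_{t,m}(\mathbf{s}_i)=(f_0-\hat{f}_{n,m})(\text{input}_t)+\epsilon_t
\]
is correct, but the conclusion you draw from it is not. Consistency of $\hat{f}_{n,m}$ kills the first summand, so the difference converges in distribution to $\epsilon_t$, \emph{not} to zero. Consequently the limiting law of $\hat{y}_{t,m}$ is that of $f_0(\text{input})$, while the limiting law of $y_t$ is that of $f_0(\text{input})+\epsilon$; these differ by a convolution with the noise distribution whenever $\sigma^2>0$. ``A standard symmetrization step'' does nothing here: symmetrization controls fluctuations of empirical processes around their mean, it does not remove a fixed additive noise component from a marginal distribution. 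So your claim that $\hat{\tilde{p}}_{n,m,\hat{y}}$ and $\tilde{p}_{n,y}$ share the same uniform limit is unsupported, and the downstream $\chi^2$ bound would not drive KL to zero. The paper avoids this issue only by \emph{assuming} $\lvert \hat{y}_{t,m}-y_t\rvert\to 0$ in probability as an additional hypothesis rather than deriving it from Theorem~\ref{thm:convergence}; if you want to follow the paper, you should state that assumption explicitly rather than appeal to symmetrization.
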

\begin{pfc}
    Under the consideration of Theorem~\ref{thm:convergence} and some regularity conditions mentioned in Theorem~\ref{thm:convergence} 
    \[
     \Tilde{p}_{n, y(\mathbf{s}_i)}(y) \to p_{\infty}\left(y; \theta_{\infty}\right),\; \text{as}\; n \to \infty.
    \]
    WLG assume $\exists$ a random set $\mathcal{Q}_m = \left\{m:\; \lvert \hat{y}_{t,m}(\mathbf{s}_i) - y(\mathbf{s}_i) \rvert \geq \varepsilon\; \text{for some}\; \varepsilon > 0\right\}$ with measure $P$ such that 
    \[
    P(\mathcal{Q}_m) \to 0\; \text{as}\; m \to \infty.
    \] 
    Then from Theorem~\ref{thm:conv_rate} we can say 
    \[
    \hat{\Tilde{p}}_{n,m, \hat{y}(\mathbf{s}_i)}(y \vert \mathcal{Q}_m) \to p_{\infty}\left(y; \theta_{\infty}\right),\; \text{as}\; n \to \infty.
    \]
 It suffices the proof.   
\end{pfc}
In this article, we empirically validate the KL divergence will asymptotically converge to zero with increasing $\delta_n$.

\section{Results and Discussion}\label{sec:result}
\subsection{Consistency for Unbounded Spatial Domain: Simulation Study}
We have simulated lattice data from the Matern covariance function \cite{cressie2015statistics} to observe the asymptotic convergence of 2-DNN in the mixed increasing domain setup. The well-known Matern covariance function is 
\[
\mathcal{C}_{\nu}(h) = \sigma^{2} \frac{2^{1-\nu}}{\Gamma(\nu)} \left(\sqrt{2\nu}\frac{h}{\phi}\right)^{\nu}\mathcal{K}_{\nu}\left(\sqrt{2\nu}\frac{h}{\phi}\right).
\]
In this Matern covariance function $\sigma^{2}, \phi, \kappa$ denote sill, range, and smoothing parameters respectively. The parameter space is $\{(\nu, \phi, \sigma^{2}): \nu > 0, \sigma^{2} > 0, \phi > 0\}$. We are considering $\eta_n = \frac{1}{\lambda_n + d}$ and assume $\Gamma_n = n^{\psi}$ with $\psi \in (0,1)$. We study our research outcomes for different radius $\delta_n = 0.3, 0.6, 0.8$ of neighborhood $\mathcal{N}_{\Gamma_n}(\mathbf{s}_i; \delta_n, \eta_n)$. Assume $N_n$ is the number of lattice data points and the 10 covariate values such as $X_1, X_2, \cdot \cdot \cdot, X_{10}$, and $Y = \sum_{i=1}^{10} i X_i$. This neural network will train $70\%$ graphs as train, $20\%$ as validation, and $10\%$ as test set. In Table~(\ref{tab:consist}) we have empirically validated the asymptotic convergence of neural network for different parameter combinations of smoothing parameters with increasing neighborhood size in fixed sampling design under mixed increasing domain framework in four scenarios. We have observed the mean square prediction error (MSPE) relatively becomes smaller with increasing $\delta_n$ and decreasing $\eta_n$ from Table~(\ref{tab:consist}). In Figure~\ref{fig:ci_snn} we demonstrate the $95\%$ CI of localized 2-DNN functional for a particular spatial location in the lattice. From Figure~\ref{fig:sub05} it's visible that the CI width becomes narrower with increasing sample size and a similar pattern is detectable for Figure~\ref{fig:sub1}, Figure~\ref{fig:sub15}, and Figure~\ref{fig:sub2}. Likewise, empirical CDF of spatial 2-DNN functional of predicted data converges rapidly with increasing $\delta_n$ since $KL\left( \Tilde{p}_{n, y(\mathbf{s}_i)}(y) \left\lvert\right\rvert \hat{\Tilde{p}}_{n,m, \hat{y}(\mathbf{s}_i)}(y)  \right) \to 0$ with increasing $m, n$ from Figure~\ref{fig:kl_snn}. In Figure~\ref{fig:kl15}, Figure~\ref{fig:kl2} we detect that the convergence rate is faster than the Figure~\ref{fig:kl05} and Figure~\ref{fig:kl1} with increasing smoothness parameter $\kappa$. From this entire discussion, we can get a brief scenario of the convergence of localized 2-DNN. From Figure~\ref{fig:kl_kappa}, we detect that under the assumption of increasing neighborhood size of $o(n^{\psi})$ the localized 2-DNN functional converges rapidly with decreasing smoothing parameter $\kappa$ of Matern variogram.
\begin{table*}[htbp]
    \centering
    \caption{Simulated Lattice Data from Matern Variogram}
    \label{tab:consist}
    \scalebox{0.85}{ % Adjust the scaling factor as needed for the table width
    \begin{tabular}{|c|c|c|}
        \toprule
        \multirow{2}{*}{\textbf{Scenario 1}} & \multicolumn{2}{c|}{$\sigma^{2} = 1, \phi = 0.1, \kappa = 0.5$} \\
        
        & $\delta_n$ & MSPE \\
        \midrule
        \multirow{3}{*}{$\lambda_n = 4, n = 20, \eta_n = 0.16$} & 0.3 & 0.046 \\
        & 0.6 & 0.0035 \\
        & 0.8 & 0.00137 \\
        \hline
        \multirow{3}{*}{$\lambda_n = 5, n = 35, \eta_n = 0.14$} & 0.3 & 0.042 \\
        & 0.6 & 0.0039 \\
        & 0.8 & 0.00138 \\
        \hline
        \multirow{3}{*}{$\lambda_n = 6, n = 50, \eta_n = 0.12$} & 0.3 & 0.04 \\
        & 0.6 & 0.0065 \\
        & 0.8 & 0.00131 \\
        \midrule
        \multirow{2}{*}{\textbf{Scenario 2}} & \multicolumn{2}{c|}{$\sigma^{2} = 1, \phi = 0.1, \kappa = 1$} \\
       
        & $\delta_n$ & MSPE \\
        \midrule
        \multirow{3}{*}{$\lambda_n = 4, n = 20, \eta_n = 0.16$} & 0.3 & 0.05 \\
        & 0.6 & 0.0047 \\
        & 0.8 & 0.0007 \\
        \hline
        \multirow{3}{*}{$\lambda_n = 5, n = 35, \eta_n = 0.14$} & 0.3 & 0.047 \\
        & 0.6 & 0.0045 \\
        & 0.8 & 0.00097 \\
        \hline
        \multirow{3}{*}{$\lambda_n = 6, n = 50, \eta_n = 0.12$} & 0.3 & 0.04 \\
        & 0.6 & 0.0063 \\
        & 0.8 & 0.0012 \\
        \bottomrule
         \multirow{2}{*}{\textbf{Scenario:3}} & \multicolumn{2}{c|}{$\sigma^{2} = 1, \phi = 0.1, \kappa = 1.5$} \\
        & $\delta_n$ & MSPE \\
        \hline
        \multirow{3}{*}{$\lambda_n = 4, n = 20, \eta_n = 0.16$} & 0.3 & 0.053 \\
        & 0.6 & 0.0059 \\
        & 0.8 & 0.00153 \\
        \hline
        \multirow{3}{*}{$\lambda_n = 5, n = 35, \eta_n = 0.14$} & 0.3 & 0.048 \\
        & 0.6 & 0.006 \\
        & 0.8 & 0.0024 \\
        \hline
        \multirow{3}{*}{$\lambda_n = 6, n = 50, \eta_n = 0.12$} & 0.3 & 0.036 \\
        & 0.6 & 0.0063 \\
        & 0.8 & 0.00125 \\
        \hline
        \multirow{2}{*}{\textbf{Scenario:4}} & \multicolumn{2}{c|}{$\sigma^{2} = 1, \phi = 0.1, \kappa = 2$} \\
        & $\delta_n$ & MSPE \\
        \hline
        \multirow{3}{*}{$\lambda_n = 4, n = 20, \eta_n = 0.16$} & 0.3 & 0.047 \\
        & 0.6 & 0.00629 \\
        & 0.8 & 0.00113 \\
        \hline
        \multirow{3}{*}{$\lambda_n = 5, n = 35, \eta_n = 0.14$} & 0.3 & 0.047 \\
        & 0.6 & 0.0051 \\
        & 0.8 & 0.0022 \\
        \hline
        \multirow{3}{*}{$\lambda_n = 6, n = 50, \eta_n = 0.12$} & 0.3 & 0.04 \\
        & 0.6 & 0.0064 \\
        & 0.8 & 0.00134 \\
        \hline    
    \end{tabular}%
    }
\end{table*}
\begin{figure}[H]
    \centering
    \begin{subfigure}[b]{0.8\linewidth}
        \centering
        \includegraphics[width=\linewidth]{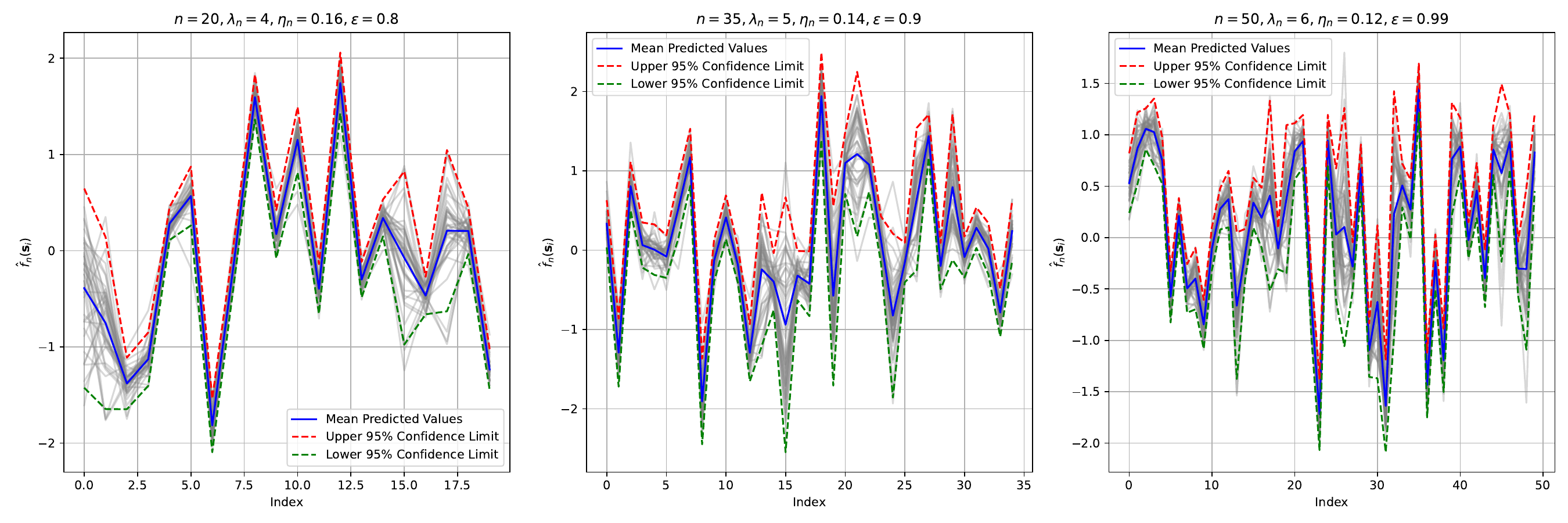}
        \caption{$(\sigma^{2}, \phi, \kappa)$ = $(1,0.1, 0.5)$ for n = $20\;(left),35\;(middle),50\;(right)$.}
        \label{fig:sub05}
    \end{subfigure}
    
    %\vspace{0.5cm}
    
    \begin{subfigure}[b]{0.8\linewidth}
        \centering
        \includegraphics[width=\linewidth]{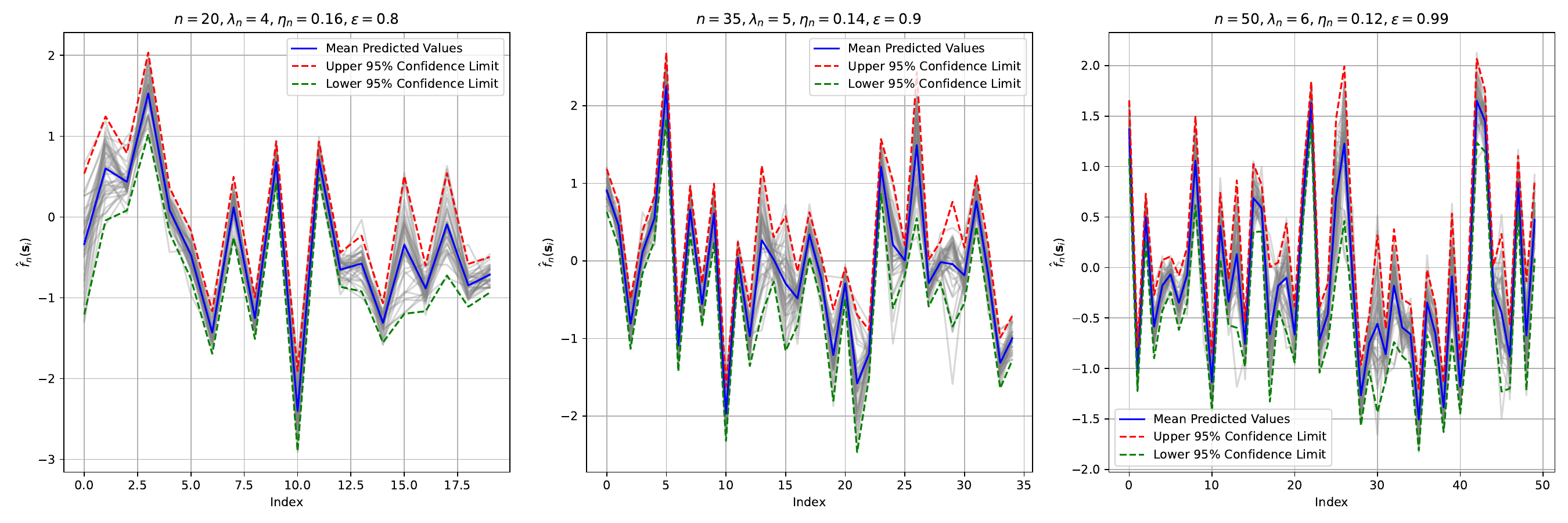}
        \caption{$(\sigma^{2}, \phi, \kappa)$ = $(1,0.1, 1)$ for n = $20\;(left),35\;(middle),50\;(right)$.}
        \label{fig:sub1}
    \end{subfigure}
    
    %\vspace{0.5cm}
    
    \begin{subfigure}[b]{0.8\linewidth}
        \centering
        \includegraphics[width=\linewidth]{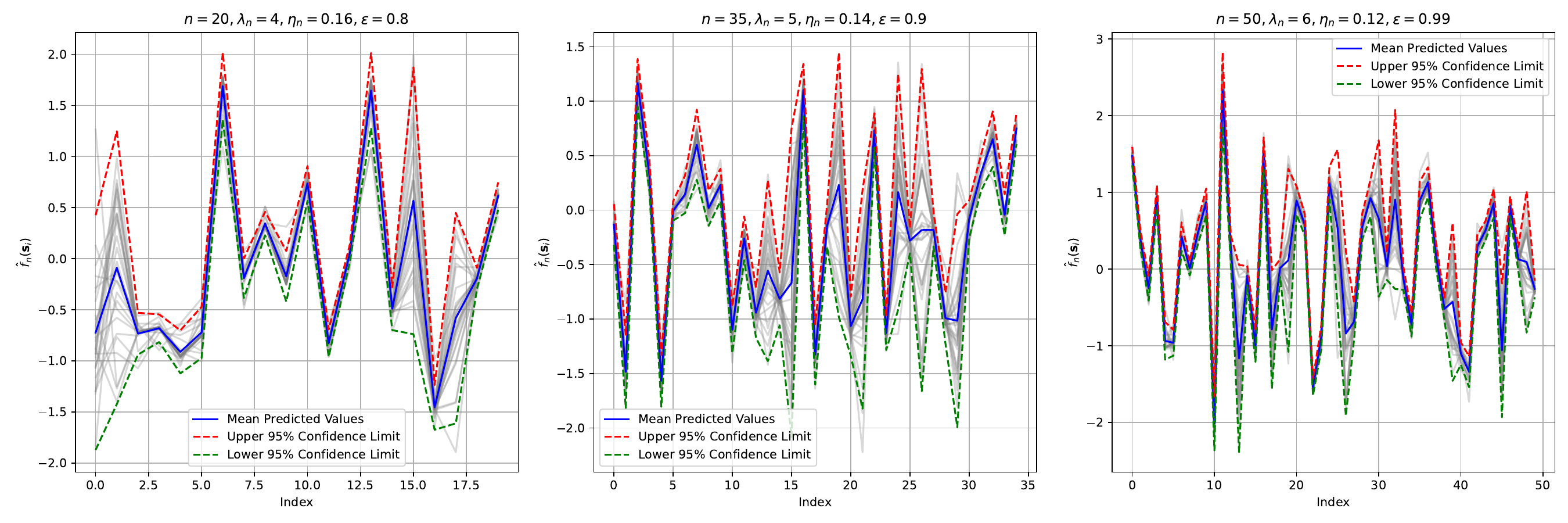}
       \caption{$(\sigma^{2}, \phi, \kappa)$ = $(1,0.1, 1.5)$ for n = $20\;(left),35\;(middle),50\;(right)$.}
        \label{fig:sub15}
    \end{subfigure}
    
    %\vspace{0.5cm}
    
    \begin{subfigure}[b]{0.8\linewidth}
        \centering
        \includegraphics[width=\linewidth]{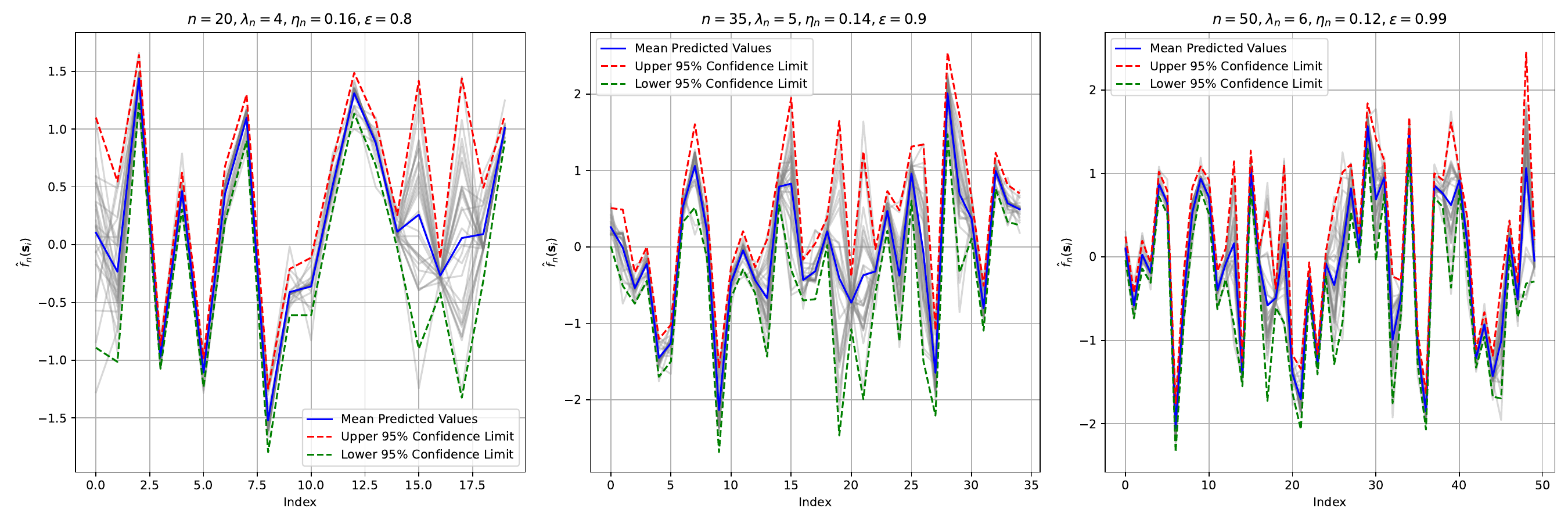}
        \caption{$(\sigma^{2}, \phi, \kappa)$ = $(1,0.1, 2)$ for n = $20\;(left),35\;(middle),50\;(right)$.}
        \label{fig:sub2}
    \end{subfigure}
    
    \caption{95$\%$ CI of localized 2-DNN functional.}
    \label{fig:ci_snn}
\end{figure}
%%%%%%%%%%%%%%%%%%%%%%%%%%%%%%%%%%%%%%%%%%%%%%%%%%%%%%%%%%%%%%%
%%%%%%%%%%%%%%%%% convergence
\begin{figure}[H]
    \centering
    \begin{subfigure}[b]{0.8\linewidth}
        \centering
        \includegraphics[width=\linewidth]{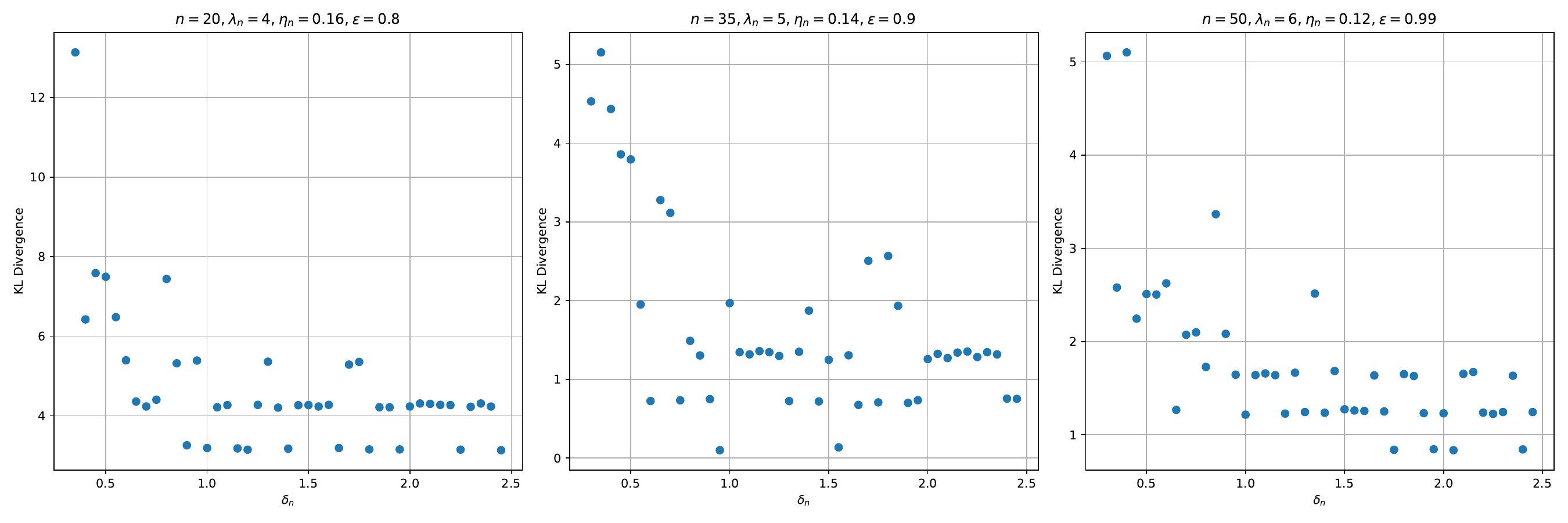}
        \caption{$(\sigma^{2}, \phi, \kappa)$ = $(1,0.1, 0.5)$ for n = $20\;(left),35\;(middle),50\;(right)$.}
        \label{fig:kl05}
    \end{subfigure}

    \begin{subfigure}[b]{0.8\linewidth}
        \centering
        \includegraphics[width=\linewidth]{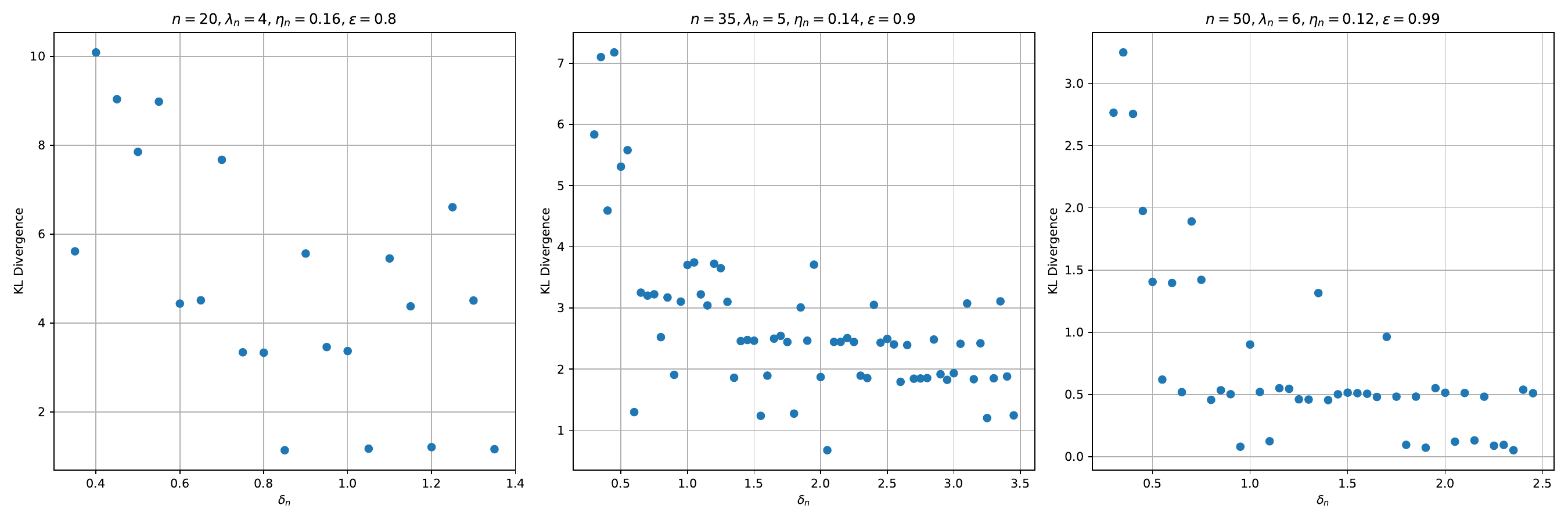}
        \caption{$(\sigma^{2}, \phi, \kappa)$ = $(1,0.1, 1)$ for n = $20\;(left),35\;(middle),50\;(right)$.}
        \label{fig:kl1}
    \end{subfigure}

    \begin{subfigure}[b]{0.8\linewidth}
        \centering
        \includegraphics[width=\linewidth]{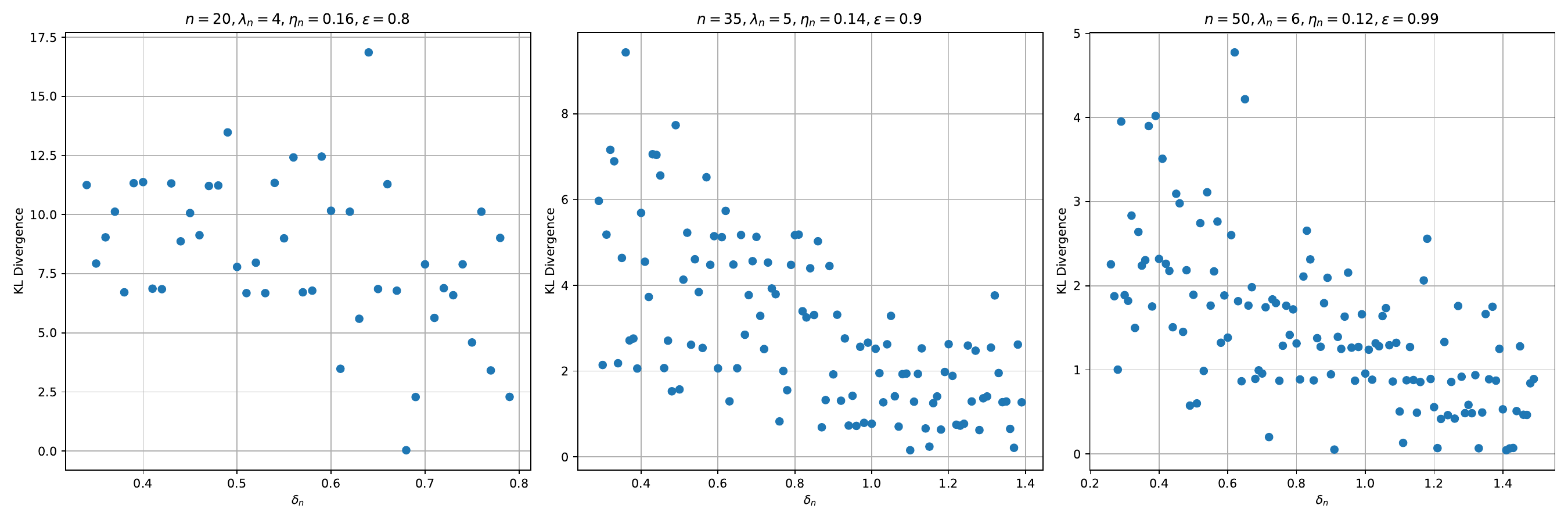}
        \caption{$(\sigma^{2}, \phi, \kappa)$ = $(1,0.1, 1.5)$ for n = $20\;(left),35\;(middle),50\;(right)$.}
        \label{fig:kl15}
    \end{subfigure}

    \begin{subfigure}[b]{0.8\linewidth}
        \centering
        \includegraphics[width=\linewidth]{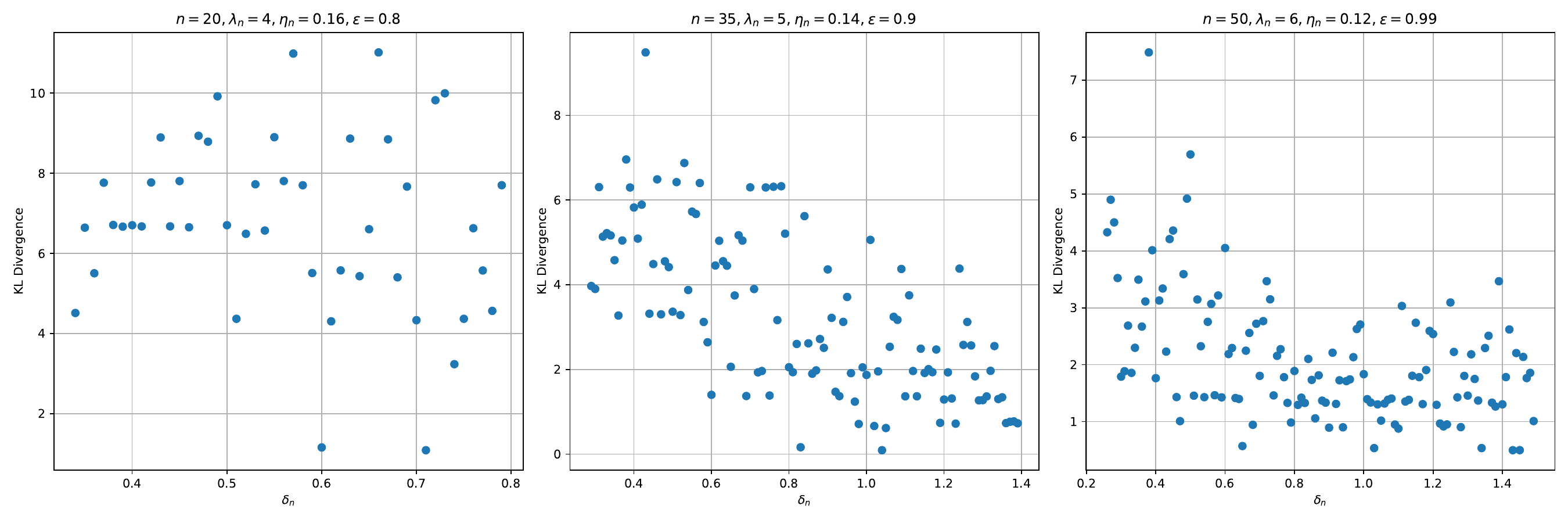}
        \caption{$(\sigma^{2}, \phi, \kappa)$ = $(1,0.1, 2)$ for n = $20\;(left),35\;(middle),50\;(right)$.}
        \label{fig:kl2}
    \end{subfigure}
    
    \caption{Asymptotic convergence of KL divergence with increasing $\delta_n$.}
    \label{fig:kl_snn}
\end{figure}
%%%%%%%%%%%%%%%%%%%%%%%%%%%%%%%%%%%%%%%%%%%%%%%%%%%%%%%%%%%%%%%%%%%%%
%%%%%%%%%%%%%%%%%%%%%%%%%%%%%%%%%%%%%%%%%%%%%%%%%%%%%%%%%%%%%%%%%%%%%

\begin{figure}[H]
    \centering
    \includegraphics[width=\linewidth]{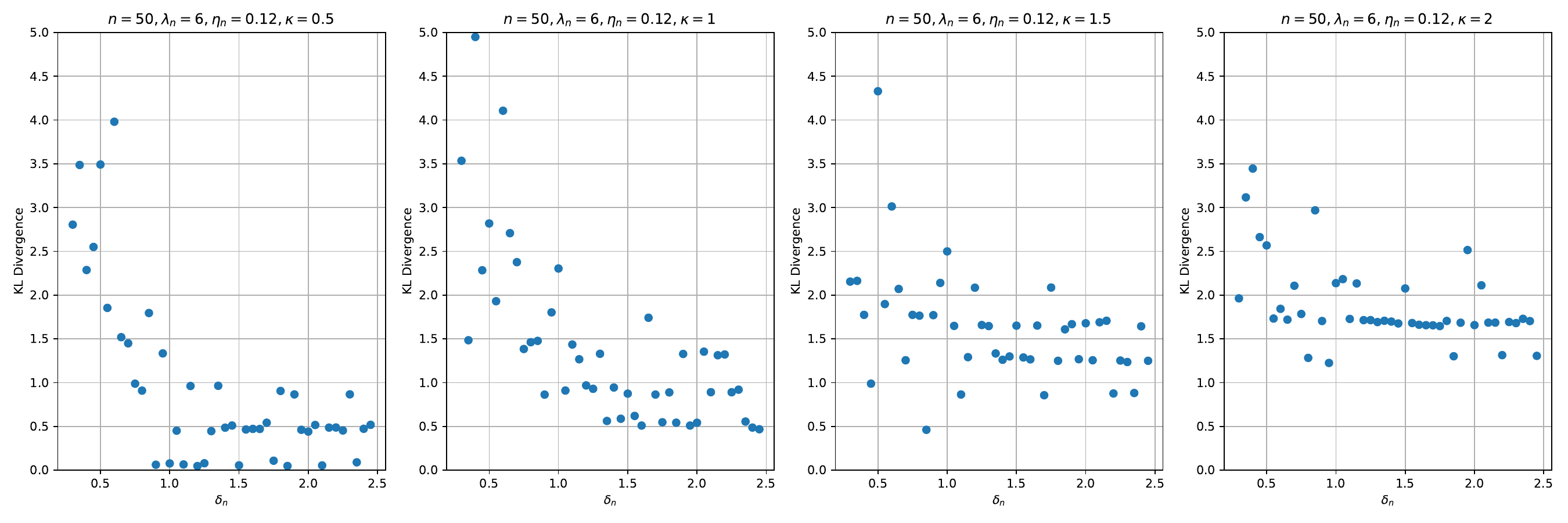}
    \caption{Variation of KL Divergence with $\kappa$ values.}
    \label{fig:kl_kappa}
\end{figure}

\subsection{Consistency for Bounded Spatial Domain: Satellite Image}
In this section, we discuss the consistency behavior, where we study whether the 2-DNN is consistent. We have already empirically validated that the consistency of localized 2-DNN is achieved under increasing $\delta_n, \lambda_n$ and decreasing $\eta_n$. But if we do $\lambda_n$ is fixed then with increasing density in the neighborhood we achieve the convergence of localized 2-DNN functional. We collect the monthly average surface temperature data for major cities\footnotemark[3] in the United States of America (USA) and satellite images from the website\footnotemark[4] where we collect monthly average surface temperature data (SKT)\footnotemark[4], monthly average of air temperature at 2m above from the surface (t2m)\footnotemark[4], and that of 2m above dew point temperature (d2m)\footnotemark[2].\\ 
\footnotetext[3]{\url{https://cds.climate.copernicus.eu}}
\footnotetext[4]{\url{https://power.larc.nasa.gov/data-access-viewer/}}
\begin{figure}[H]
    \centering
    \includegraphics[width=\textwidth]{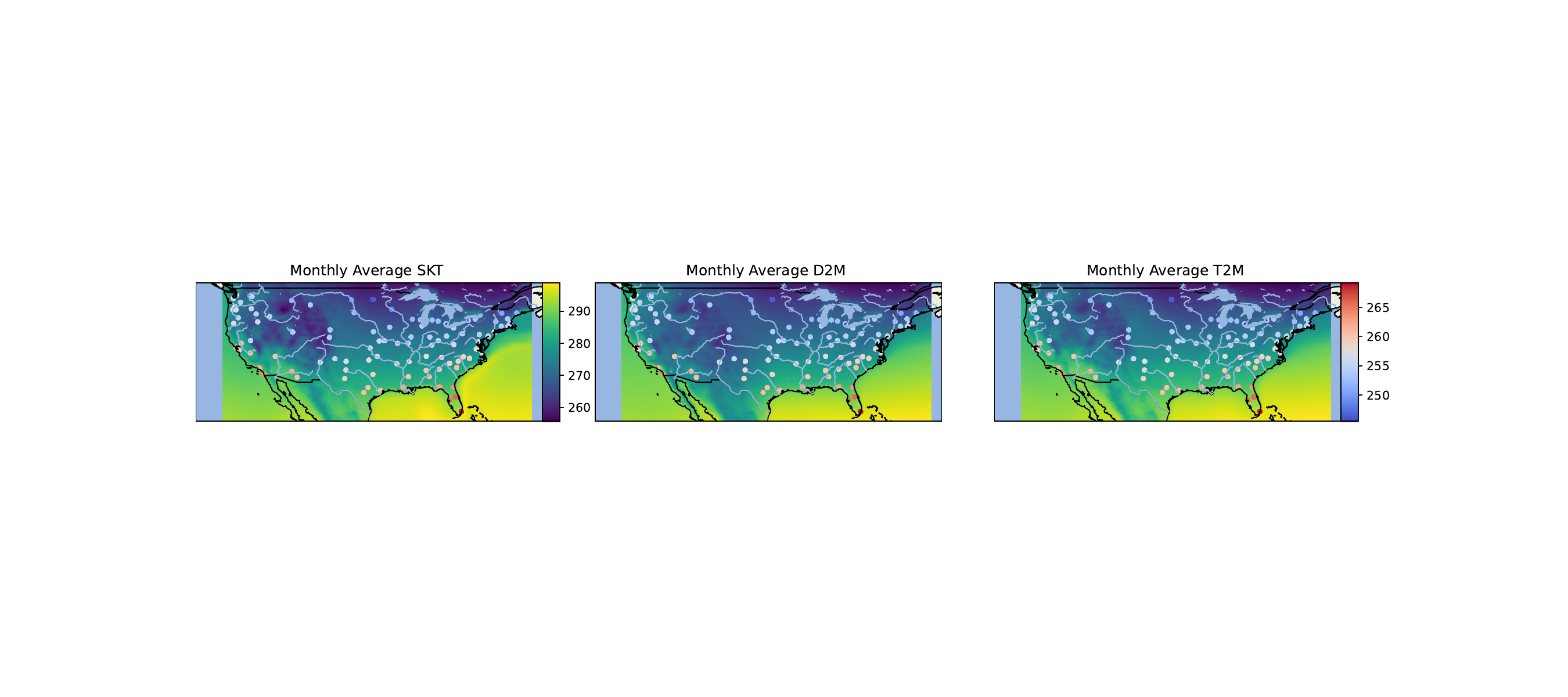}
    \caption{Satellite images with pixel spacing, $\eta_n = 0.25$ and the dotted points are indicating monthly (January) average of 1982 of SKT of cities in USA.}
    \label{fig:us_city}
\end{figure}
Now for each city in Figure~\ref{fig:us_city}, we create a neighborhood of $\delta_n$ and consider those spatial pixels included in this neighborhood. We assume monthly average surface temperature as $y_{city}(\mathbf{s}_i)$ and that of neighborhood pixels are $\{y_{im}(\mathbf{s}_{i_1}), y_{im}(\mathbf{s}_{i_2}), \cdot \cdot \cdot, y_{im}(\mathbf{s}_{i_m})$. Consider t2m as $\mathcal{X}_1$ and d2m as $\mathcal{X}_2$. Also, assume the covariates in those locations included in the neighborhood. Here the initial grid spacing $\eta_0 = 0.25$ and in each resolution to get finer resolution at $l^{th}$ level we consider $\eta_l = \eta_0/ 2^{l-1}$. Thus the grid spacing becomes a decreasing sequence and sampling region $\mathcal{R}_n$ becomes heavily dense. We consider the neural network as 
\begin{equation*}
    \begin{split}
      y_{city}(\mathbf{s}_i) = f_{DNN}\left\{y_{im}(\mathbf{s}_{i_1}), y_{im}(\mathbf{s}_{i_2}), \cdot \cdot \cdot, y_{im}(\mathbf{s}_{i_m})\right.\\
      \left.\mathcal{X}_{1,im}(\mathbf{s}_{i_1}), \mathcal{X}_{1,im}(\mathbf{s}_{i_2}), \cdot \cdot \cdot, \mathcal{X}_{1,im}(\mathbf{s}_{i_m}) \right.\\
      \left.\mathcal{X}_{2,im}(\mathbf{s}_{i_1}), \mathcal{X}_{2,im}(\mathbf{s}_{i_2}), \cdot \cdot \cdot, \mathcal{X}_{2,im}(\mathbf{s}_{i_m}) \right\} + \epsilon_i.
    \end{split}
\end{equation*}

With increasing $\delta_n$ we increase the neighborhood size $m$. To avoid a high-dimensional setup we restrict our number of covariates $(p+1)\Gamma_n = 3m < <  N_n$. The number of times stands from January 1982 to December 2022. In Figure~\ref{fig:us_city} we depict the monthly average of January 1982 of major cities in US\footnotemark[4] and the corresponding satellite image\footnotemark[3]. We use a two-layer deep neural network with \texttt{tanh} activation function in this setup. From Figure~\ref{fig:consist_sat} we can visualize that root mean square error (RMSE) is decreasing with increasing $\delta_n$. We will discuss one important pattern that is detectable from this Figure~\ref{fig:consist_sat} that for $\eta_n = 0.25, \delta_n \geq 0.48$ the RMSE $\leq 9.8$, for $\eta_n = 0.125, \delta_n \geq 0.3$ the RMSE $\leq 9.8$, for $\eta_n = 0.06, \delta_n \geq 0.25$ the RMSE $\leq 9.8$, and for $\eta_n = 0.03, \delta_n \geq 0.18$ the RMSE $\leq 9.8$. Thus we state that whenever we decrease $\eta_n$ our $\Gamma_n \equiv m(n)$ becomes large so that for smaller $\delta_n$ we can achieve the desired accuracy.

\begin{figure}[H]
    \centering
    \includegraphics[width=0.8\linewidth]{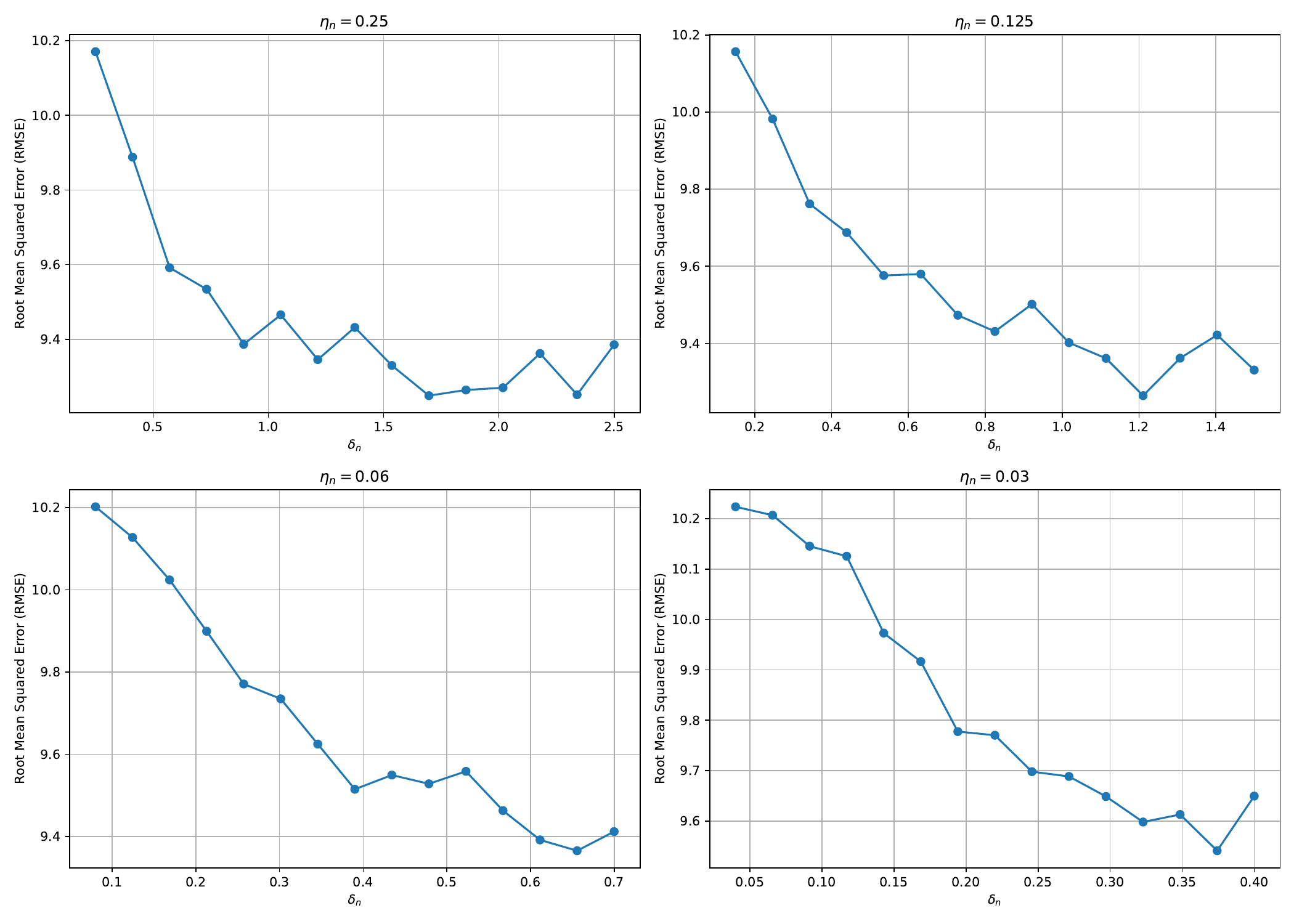}
    \caption{Consistancy of Localized 2-DNN Estimator.}
    \label{fig:consist_sat}
\end{figure}

\section{Conclusion}
From this entire discussion, we have empirically and theoretically proved that localized 2-DNN estimator is consistent with decreasing $\eta_n$, increasing $\Gamma_n$ (for higher resolution) for bounded ($\lambda_n$ is fixed) and unbounded ($\lambda_n = o(n)$) for fixed sampling design of lattice data under mixed increasing spatial sampling. This localized 2-DNN function generalizes spatial regression for the first time. The convergence rate of our localized 2-DNN is faster than the convergence rate of spatial GNN of \cite{zhan2024neural}, \cite{shen2023asymptotic}. We empirically observe the convergence of KL divergence of empirical CDF for observed and predicted data towards zero with decreasing smoothness of the Matern variogram. From the application perspective, we illustrate how, if we collect data from different sources, our localized 2-DNN can iteratively achieve its optimum accuracy level. Although our asymptotic analysis is providing an effective outcome for 2-DNN some limitations direct our future research such as we were desperate here to avoid a high-dimensional setup and therefore we always restrict our neighborhood size. In the functional constraint, we ignore the temporal dependence and assume iid input nodes. We do not theoretically justify how variogram parameters are influencing 2-DNN estimates. These problems are still open problems. We will explore these problems in the coming future.

\section*{Declarations}

% Some journals require declarations to be submitted in a standardised format. Please check the Instructions for Authors of the journal to which you are submitting to see if you need to complete this section. If yes, your manuscript must contain the following sections under the heading `Declarations':

\begin{itemize}
\item Conflict of interest/Competing interests: No conflict of interest exists.
% \item Ethics approval and consent to participate
% \item Consent for publication
\item Data availability: Data is open source, and the available URLs are mentioned in the article.
% \item Materials availability
\item Code availability: Code is available in \url{https://github.com/debjoythakur/Spatial_subsampling_NN}.
% \item Author contribution
\end{itemize}

% \noindent
% If any of the sections are not relevant to your manuscript, please include the heading and write `Not applicable' for that section. 

% %%===================================================%%
% %% For presentation purpose, we have included        %%
% %% \bigskip command. Please ignore this.             %%
% %%===================================================%%
% \bigskip
% \begin{flushleft}%
% Editorial Policies for:

% \bigskip\noindent
% Springer journals and proceedings: \url{https://www.springer.com/gp/editorial-policies}

% \bigskip\noindent
% Nature Portfolio journals: \url{https://www.nature.com/nature-research/editorial-policies}

% \bigskip\noindent
% \textit{Scientific Reports}: \url{https://www.nature.com/srep/journal-policies/editorial-policies}

% \bigskip\noindent
% BMC journals: \url{https://www.biomedcentral.com/getpublished/editorial-policies}
% \end{flushleft}

\begin{appendices}
 \section{Proof of Theorem~\ref{thm:existance}}\label{sec:pf_ex}
\begin{proof} We discuss the proof of the existence of 2-DNN in several steps.
    \begin{enumerate}
        \item[Step-1:] $\mathbb{L}_{n}$ is measurable since,
        \begin{equation*}
            \begin{split}
                &\mathbb{L}_{n} (f) = \frac{1}{n} \sum_{t=1}^{n} 
                \left\{Y_t(\mathbf{s}_i) - f\left(Y_t(\mathbf{s}_{\sigma_1}), Y_t(\mathbf{s}_{\sigma_2}),\cdot \cdot \cdot, Y_t(\mathbf{s}_{\sigma_{\Gamma_n}}),
                \mathcal{X}_t(\mathbf{s}_{\sigma_{1}}),\right.\right.\\
                &\left.\left.
                \mathcal{X}_t(\mathbf{s}_{\sigma_{2}}), \cdot \cdot \cdot, \mathcal{X}_t(\mathbf{s}_{\sigma_{\Gamma_n}}\right) \right\}^{2}\\
                &=\frac{1}{n} \sum_{t=1}^{n} \left[f_0\left(Y_t(\mathbf{s}_{\sigma_1}), Y_t(\mathbf{s}_{\sigma_2}),\cdot \cdot \cdot, Y_t(\mathbf{s}_{\sigma_{\Gamma_n}}),
                \mathcal{X}_t(\mathbf{s}_{\sigma_{1}}),
                \mathcal{X}_t(\mathbf{s}_{\sigma_{2}}), \cdot \cdot \cdot, \mathcal{X}_t(\mathbf{s}_{\sigma_{\Gamma_n}}\right)\right.\\
                &\left.- f\left(Y_t(\mathbf{s}_{\sigma_1}), Y_t(\mathbf{s}_{\sigma_2}),\cdot \cdot \cdot, Y_t(\mathbf{s}_{\sigma_{\Gamma_n}}),
                \mathcal{X}_t(\mathbf{s}_{\sigma_{1}}),
                \mathcal{X}_t(\mathbf{s}_{\sigma_{2}}), \cdot \cdot \cdot, \mathcal{X}_t(\mathbf{s}_{\sigma_{\Gamma_n}}\right) + \epsilon_t\right]^{2}
            \end{split}
        \end{equation*}
        \begin{equation*}
        \begin{split}
            &=\frac{2}{n} \sum_{t=1}^{n} \epsilon_t \left(f_{0t} -f_t\right) + \frac{1}{n} \sum_{t=1}^{n} \epsilon_{t}^{2}+\frac{1}{n} \sum_{t=1}^{n}\left(f_{0t} -f_t\right)^{2},
        \end{split}
        \end{equation*}
        and $\epsilon$ is measurable.
        \item[Step-2:] For fixed $\omega \in \Omega$ we know that $\mathbb{L}_{n}(f) \equiv \mathbb{L}_{n}(f(\omega))$ is continuous in $f$.
        \item[Step-3:]$\mathcal{F}_{r,n}$ is a compact subset of $\mathcal{F}$\\
        $\Leftrightarrow$ $\exists\;\; h:\Theta_{n} \times \mathcal{X} \to \mathcal{F}_{r,n}$ is continuous since $\Theta_{n}$ is a compact subset of $\mathbb{R}^{(r^{2}(d+2) + 2r + 1)}$ and $\mathcal{X}$ is a compact subset of $\mathbb{R}^{d}$
        
        \begin{equation*}
            \begin{split}
                \Theta_{n} &= \left[\vartheta_{0},\cdot \cdot \cdot, \vartheta_{r}, \nu_{00},\cdot \cdot \cdot,\nu_{rr}, \upsilon_{1100},...,\upsilon_{rr(p+1)\Gamma_n} \right]^{'}\\
                &\in [-V_1, V_1]^{r+1} \times [-V_2, V_2]^{(r+1)^2} \times [-V_1, V_1]^{r^{2} (p+2)(\Gamma_n+1)}\\
                \text{where,}\;h(\theta_{n}) &= \vartheta_0 + \sum_{i=1}^{r} \vartheta_i \texttt{tanh}\Bigg(\sum_{j=1}^{r} \nu_{ij}\texttt{tanh}\Bigg(\sum_{k=1}^{p}\sum_{l=1}^{\Gamma_n} \upsilon_{ijkl} X_k(\mathbf{s}_{\sigma_l}) + \\
        &\sum_{l=1}^{\Gamma_n}\upsilon_{ij(p+1)l} Y(\mathbf{s}_{\sigma_l}) + \upsilon_{ij00}\Bigg) + \nu_{i0}\Bigg)
            \end{split}
        \end{equation*}
        $\Leftrightarrow$ It's enough to show for all\; $\theta_{1n}, \theta_{2n} \in \Theta_{n}$
        \[
        \lvert \lvert h(\theta_{1n}) - h(\theta_{2n}) \rvert \rvert_{n}^{2} \leq V^{*} \lvert \lvert \theta_{1n} - \theta_{2n} \rvert \rvert_{n}^{2}
        \]
        Therefore our main responsibility is to find $V^{*}$. \\
        For ease of notation assume,
        \[
        \mathcal{Y}_t= \left[X_{1t}(\mathbf{s}_{\sigma_1}), \cdot \cdot \cdot, X_{1t}(\mathbf{s}_{\sigma_{\Gamma_n}}),\cdot \cdot \cdot, X_{pt}(\mathbf{s}_{\sigma_{\Gamma_n}}), Y_t(\mathbf{s}_{\sigma_{1}}),\cdot \cdot\cdot, Y_t(\mathbf{s}_{\sigma_{\Gamma_n}})\right]
        \]
        \begin{equation*}
    \begin{split}
        &\lvert \lvert h(\theta_{1n}) - h(\theta_{2n}) \rvert \rvert_{n}^{2}\\
        &\leq \frac{1}{n} \sum_{t=1}^{n} \left[\left\lvert \vartheta_{0}^{(1)} - \vartheta_{0}^{(2)} \right\rvert + \sum_{i=1}^{r} \left\lvert \vartheta_i^{(1)} \texttt{tanh}\left(\sum_{j=1}^{r} \nu_{ij}^{(1)}\texttt{tanh}\left(\sum_{k=1}^{(p+1)\Gamma_n} \upsilon_{ijk}^{(1)} \mathcal{Y}_{k,t} + \upsilon_{ij0}^{(1)}\right) + \nu_{i0}^{(1)} \right)\right.\right. \\
        &\hspace{3cm} \left.\left.- \vartheta_i^{(2)} \texttt{tanh}\left(\sum_{j=1}^{r} \nu_{ij}^{(2)}\texttt{tanh}\left(\sum_{k=1}^{(p+1)\Gamma_n} \upsilon_{ijk}^{(2)} \mathcal{Y}_{k,t} + \upsilon_{ij0}^{(2)}\right) + \nu_{i0}^{(2)} \right) \right\rvert \right]^{2} 
    \end{split}
\end{equation*}
\begin{equation*}
 \begin{split}
        &\lvert \lvert h(\theta_{1n}) - h(\theta_{2n}) \rvert \rvert_{n}^{2}\\
        &\leq \frac{1}{n} \sum_{t=1}^{n} \left[
            \sum_{i=0}^{r} \left\lvert \vartheta_i^{(1)} - \vartheta_i^{(2)} \right\rvert 
            +V_1 \sum_{i=1}^{r} \left\lvert \nu_{i0}^{(1)} - \nu_{i0}^{(2)} \right\rvert \right.\\
            &\left. + V_1 \sum_{i=1}^{r} \sum_{j=1}^{r} \left\lvert
                \nu_{ij}^{(1)} \texttt{tanh}\left( \sum_{k=1}^{(p+1)\Gamma_n} \upsilon_{ijk}^{(1)} \mathcal{Y}_{k,t} + \upsilon_{ij0}^{(1)} \right) 
                - \nu_{ij}^{(2)} \texttt{tanh}\left( \sum_{k=1}^{(p+1)\Gamma_n} \upsilon_{ijk}^{(2)} \mathcal{Y}_{k,t} + \upsilon_{ij0}^{(2)} \right)
            \right\rvert\right]^{2}\\
        & \leq \frac{1}{n} \sum_{t=1}^{n} \left[
            \sum_{i=0}^{r} \left\lvert \vartheta_i^{(1)} - \vartheta_i^{(2)} \right\rvert 
            +V_1 \sum_{i=1}^{r}\sum_{j=0}^{r} \left\lvert \nu_{ij}^{(1)} - \nu_{ij}^{(2)} \right\rvert  \right.\\
            & \left. + V_1 V_2 \sum_{i=1}^{r} \sum_{j=1}^{r} \left \lvert \upsilon_{ij0}^{(1)} - \upsilon_{ij0}^{(2)} \right \rvert 
            +V_1 V_2 \sum_{i=1}^{r} \sum_{j=1}^{r} \sum_{k=1}^{(p+1)\Gamma_n} \left\lvert \upsilon_{ijk}^{(1)} \mathcal{Y}_{k,t} - \upsilon_{ijk}^{(2)} \mathcal{Y}_{k,t} \right \rvert 
            \right]^{2}\\
        & \leq \frac{1}{n} \sum_{t=1}^{n} \left[
            \sum_{i=0}^{r} \left\lvert \vartheta_i^{(1)} - \vartheta_i^{(2)} \right\rvert 
            +V_1 \sum_{i=0}^{r}\sum_{j=0}^{r} \left\lvert \nu_{ij}^{(1)} - \nu_{ij}^{(2)} \right\rvert
            +V_1 V_2 \sum_{i=0}^{r} \sum_{j=0}^{r} \sum_{k=0}^{(p+1)\Gamma_n} \left\lvert \upsilon_{ijk}^{(1)}- \upsilon_{ijk}^{(2)}\right \rvert 
            \right]^{2}  \\
        & \leq  \left[\sum_{i=0}^{r}\sum_{j=0}^{r} \sum_{k=0}^{(p+1)\Gamma_n} \left\lvert \vartheta_i^{(1)} - \vartheta_i^{(2)} \right\rvert + V_1 \sum_{i=0}^{r}\sum_{j=0}^{r}\sum_{k=0}^{(p+1)\Gamma_n} \left\lvert \nu_{ij}^{(1)} - \nu_{ij}^{(2)} \right\rvert\right.\\
        &\left.+V_1 V_2 \sum_{i=0}^{r}\sum_{j=0}^{r}\sum_{k=0}^{(p+1)\Gamma_n} \left\lvert \upsilon_{ijk}^{(1)} - \upsilon_{ijk}^{(2)} \right\rvert \right]^{2} \\
        & \leq  \left(V_1 V_2\left\{(r+1)(p+1)\Gamma_n + (p+1)\Gamma_n\right\} \right)^{2} \lvert \lvert \theta_{1n} - \theta_{2n} \rvert \rvert_{n}^{2}.
    \end{split}
\end{equation*}
As a result, we get
        \begin{equation}
            V^{*} = \Bigg(V_1 V_2\left\{(r+1)(p+1)\Gamma_n + (p+1)\Gamma_n\right\} \Bigg)^{2}.
        \end{equation}
    \end{enumerate}
\end{proof}	
\section{Proof of Theorem~\ref{thm:convergence}}\label{sec:pf_conv}
\begin{proof}
    According to Theorem 3.1 in \cite{chen2007large} and Corollary 2.6 in \cite{white1991some} it's enough to show that 
    \[
    \text{sup}_{f \in \mathcal{F}_{r,n}} \lvert \mathbb{L}_{n}(f) - \mathcal{L}_{n}(f) \rvert \to^{P^{*}} 0\;\; \text{as}\;\; n\to \infty
    \]
    Now we will give the outline of the proof
    \begin{enumerate}
        \item[Step-1:]For any $\varepsilon > 0$ we have
        \begin{equation}
            \begin{split}
                &P^{*}\left( \text{sup}_{f \in \mathcal{F}_{r,n}} \lvert \mathbb{L}_{n}(f) - \mathcal{L}_{n}(f) \rvert > \frac{\varepsilon}{2}\right)
                \leq P\left(\left\lvert \frac{1}{n} \sum_{t=1}^{n} \varepsilon_{t}^{2} - \sigma^{4} \right\rvert > \frac{\varepsilon}{4}\right) \\
                &+ P^{*}\left( \text{Sup}_{f \in \mathcal{F}_{r,n}} \left\lvert\frac{1}{n} \sum_{t=1}^{n} \epsilon_{i} \left\{ f(\mathcal{Y}_t) - f_{0}(\mathcal{Y}_t)\right\} \right\rvert > \frac{\varepsilon}{4}\right)
            \end{split}
        \end{equation}
        \item[Step-2:] The First part will converge to zero using WLLN but for the second part using Markov's Inequality we are required to prove 
        \[
         E_{\epsilon}\left( \text{Sup}_{f \in \mathcal{F}_{r,n}} \left\lvert\frac{1}{n} \sum_{t=1}^{n} \epsilon_{t} \left\{ f(\mathcal{Y}_t) - f_{0}(\mathcal{Y}_t)\right\} \right\rvert\right) \to 0\; \text{as} \; n\to \infty
        \]
        \item[Step-3:] According to Symmetrisation for $\alpha-$Mixing spatial surface with grid spacing $\eta_{n}  \to 0$ as $n \to \infty$  
        \begin{equation}
            \begin{split}
                E_{\epsilon} \lvert \lvert \mathbb{P}_{n} - P \rvert \rvert_{\mathcal{F}_{r,n}} 
 = E_{\epsilon}\left( \text{Sup}_{f \in \mathcal{F}_{r,n}} \left\lvert\frac{1}{n} \sum_{t=1}^{n} \epsilon_{t} \left\{ f(\mathcal{Y}_t) - f_{0}(\mathcal{Y}_t)\right\} \right\rvert\right)\\
 \leq 2 E_{\epsilon}E_{\rho} \text{Sup}_{f \in \mathcal{F}_{r,n}} \left\lvert\frac{1}{n} \sum_{t=1}^{n} \rho_t\epsilon_{t} \left\{ f(\mathcal{Y}_t) - f_{0}(\mathcal{Y}_t)\right\} \right\rvert
            \end{split}
        \end{equation}
      where $\rho_t;\;t=1,2,...,n$ are Rademacher random variable independent of $\epsilon_{1}, \epsilon_{2},..., \epsilon_{n}$. For fixed values of $\epsilon_{1}, \epsilon_{2},..., \epsilon_{n}$ (conditionally) $\sum_{t=1}^{n} \rho_t\epsilon_{t} \left\{ f(\mathcal{Y}_t) - f_{0}(\mathcal{Y}_t)\right\}$ is a sub-gaussian process.
      \item[Step-4:] According to the definition of covering number and Corollary 2.2.8 of \cite{wellner2013weak} we will get a universal constant $K$. Then using Dudley's integral entropy bound in VC-dimension $\nu$ with any $f_{n}^{*} \in \mathcal{F}_{r,n}$ with $n \geq N_1$ we will get 
      \begin{equation*}
          \begin{split}
            &E_{\rho} \text{Sup}_{f \in \mathcal{F}_{r,n}} \left\lvert\frac{1}{n} \sum_{t=1}^{n} \rho_t\epsilon_{i} \left\{ f(\mathcal{Y}_t) - f_{0}(\mathcal{Y}_t)\right\} \right\rvert\\
            &\leq   E_{\rho}  \left\lvert\frac{1}{n} \sum_{t=1}^{n} \rho_t\epsilon_{t} \left\{ f_{n}^{*}(\mathcal{Y}_t) - f_{0}(\mathcal{Y}_t)\right\} \right\rvert \\
            &+K\int_{0}^{\infty}\sqrt{\frac{log\;D(\varphi, \mathcal{F}_{r,n}, \lvert \lvert . \rvert \rvert_{\infty})}{n}}d\varphi  
          \end{split}
      \end{equation*}
      \begin{equation*}
          \begin{split}
               &\leq   E_{\rho}  \left\lvert\frac{1}{n} \sum_{t=1}^{n} \rho_t\epsilon_{t} \left\{ f_{n}^{*}(\mathcal{Y}_t) - f_{0}(\mathcal{Y}_t)\right\} \right\rvert \\
        &+K\sqrt{\frac{\nu}{n}}\int_{0}^{2V_1V_2}\sqrt{\frac{log\;\mathcal{N}(\varphi/2, \mathcal{F}_{r,n}, \lvert \lvert . \rvert \rvert_{\infty})}{n}} d\varphi
          \end{split}
      \end{equation*}
      \begin{equation}\label{eq:int_step_5}
          \begin{split}
               &\leq   E_{\rho}  \left\lvert\frac{1}{n} \sum_{t=1}^{n} \rho_t\epsilon_{t} \left\{ f_{n}^{*}(\mathcal{Y}_t) - f_{0}(\mathcal{Y}_t)\right\} \right\rvert \\
               &+K\sqrt{\frac{\nu}{n}}\int_{0}^{2V_1V_2}\sqrt{\frac{log\;\mathcal{N}\left(\frac{\varphi}{2V_1V_2\sqrt{\sigma^{2} + 1}}, \mathcal{F}_{r,n}, \lvert \lvert . \rvert \rvert_{\infty}\right)}{n} d\xi}\\
        &= I_{3} + I_{4}
          \end{split}
      \end{equation}
      \item[Step-5:] According to \cite{hornik1989multilayer} we utilize the concept of universal approximation of $I_{3}$ in (\ref{eq:int_step_5}) and then it will be 
      \begin{equation}
          \begin{split}
         &2\;E_{\epsilon}E_{\rho}  \left\lvert\frac{1}{n} \sum_{t=1}^{n} \rho_t\epsilon_{t} \left\{ f_{n}^{*}(\mathcal{Y}_t) - f_{0}(\mathcal{Y}_t)\right\} \right\rvert \\
        &\leq 2 \sqrt{\sigma^{2} + 1} \times \text{Sup}_{\mathcal{Y}} \left\lvert f_{n}^{*}(\mathcal{Y}_t) - f_{0}(\mathcal{Y}_t) \right\rvert\\
        &\leq 2 \sqrt{\sigma^{2} + 1} \times \text{Sup}_{\mathcal{Y}} \left\lvert f_{n}^{*}(\mathcal{Y}_t) - f_{0}(\mathcal{Y}_t) \right\rvert\to 0 \;\text{as} \; n\to \infty
          \end{split}
      \end{equation}
      \item[Step-6:] According to \cite{anthony1999neural} from Lemma 14.3, Lemma 14.7
      \begin{equation*}
      \begin{split}
          &\mathcal{N}\left(\frac{\varphi}{2V_1V_2\sqrt{\sigma^{2} + 1}}, \mathcal{F}_{r,n}, \lvert \lvert . \rvert \rvert_{\infty}\rvert\right)\\
          &\leq \text{max}_{\mathcal{Y}} \mathcal{N}\left(\frac{\varphi}{V_1 \sqrt{\sigma^{2} + 1}}, \mathcal{F}^{(1)}_{r,n}, \lvert \lvert . \rvert \rvert^{\xi}_{\infty})\right)\\
          &\times \mathcal{N}\left(\frac{\varphi}{V_1V_2\sqrt{\sigma^{2} + 1}}, \mathcal{F}^{(2)}_{r,n}, \lvert \lvert . \rvert \rvert_{\infty}\right)\\
      \end{split}
      \end{equation*}
      \begin{equation*}
          \begin{split}
               &\leq \left(\frac{16e V_2 \sqrt{\sigma^{2} + 1}}{\varphi}\right)^{r^{2} }
           \times \left(\frac{(V_1 V_2)^{2} \left\{r((p+1)\Gamma_n+4)+2\right\}}{2(V_1 V_2 - 1)\varphi} \right)^{r((p+1)\Gamma_n+4) + 2}\\
           &\leq \Tilde{C}_{r, p,\Gamma_n,V_1, V_2}\; \varphi^{-(r^{2} + r((p+1)\Gamma_n+4) + 2)}\\
          \end{split}
      \end{equation*}
      \begin{equation*}
      \begin{split}
           &\Leftrightarrow log\;\mathcal{N}\left(\frac{\varphi}{2V_1V_2\sqrt{\sigma^{2} + 1}}, \mathcal{F}_{r,n}, \lvert \lvert . \rvert \rvert_{\infty}\rvert\right) \leq
           r^{2}\log\left(\frac{16e V_2 \sqrt{\sigma^{2} +1}}{\varphi}\right)\\ 
           &+ \left(r((p+1)\Gamma_n+4)+2\right)\log\left(\frac{(V_1 V_2)^{2} (r((p+1)\Gamma_n +4)+2) }{2 (V_1 V_2 - 1) \varphi}\right)^{r((p+1)\Gamma_n + 4)+2}
      \end{split}
      \end{equation*}
      According to Assumption~(A2.1) we deduce,
      \[
      \left(8 V_2 \sqrt{\sigma^{2} + 1}\right) (r(((p+1)\Gamma_n)+4) + 2) \leq \left(r^{2} + r(((p+1)\Gamma_n)+4) + 2\right)
      \]
      then,
      \begin{equation*}
          \begin{split}
              &\log\left(\frac{\left(8 V_2 \sqrt{(\sigma^{2} + 1)}\right) (r(((p+1)\Gamma_n)+4) + 2) (V_1 V_2)^{2}}{V_1V_2 - 1}\right)^{\left(r^{2} + r(((p+1)\Gamma_n)+4) + 2\right)}\\
            &+\left(r^{2} + r(((p+1)\Gamma_n)+4) + 2\right) \log (1/\varphi)\\
         &\leq \left(r^{2} + r(((p+1)\Gamma_n)+4) + 2\right)\log\left(\frac{ (r^{2} + r(((p+1)\Gamma_n)+4) + 2) (V_1V_2)^{2}}{V_1V_2 - 1}\right)\\
         &+ \left(r^{2} + r(((p+1)\Gamma_n)+4) + 2\right) \log (1/\varphi)\\
         &\leq \left(r^{2} + r(((p+1)\Gamma_n)+4) + 2\right)\log\left(\frac{ (r^{2} + r(((p+1)\Gamma_n)+4) + 2) (V_1 V_2)^{2}}{V_1 V_2 - 1}\right)\\
         &\times \left(1+\frac{1}{\varphi}\right).
          \end{split}
      \end{equation*}
      Consider $C_{r, d,V_1, V_2} = \left(r^{2} + r(((p+1)\Gamma_n)+4) + 2\right)\log\left(\frac{ (r^{2} + r(((p+1)\Gamma_n)+4) + 2) (V_1 V_2)^{2}}{V_1 V_2 - 1}\right)$. Hence we have got 
      \begin{equation*}
          \begin{split}
               &\int_{0}^{2V_1V_2} \sqrt{log\;\mathcal{N}\left(\frac{\varphi}{2V_1V_2\sqrt{\sigma^{2} + 1}}, \mathcal{F}_{r,n}, \lvert \lvert . \rvert \rvert_{\infty}\rvert\right)}\\
               &\leq  \int_{0}^{2V_1V_2} C^{1/2}_{r, p, \Gamma_n,V_1, V_2}\left(1+\frac{1}{\varphi}\right)^{1/2} d\varphi\\
               &\leq 4\sqrt{2} C^{1/2}_{r, p, \Gamma_n,V_1, V_2}
          \end{split}
      \end{equation*}
    \end{enumerate}
 \item[Step-7:]Combining the previous all the steps, Step-3, Step-4, Step-5, and Step-6 in Step-2 we will get
 \begin{equation}\label{eq:conv_final}
     \begin{split}
         &E_{\epsilon}\left( \text{Sup}_{f \in \mathcal{F}_{r,n}} \left\lvert\frac{1}{n} \sum_{t=1}^{n} \epsilon_{t} \left\{ f(\mathcal{Y}_t) - f_{0}(\mathcal{Y}_t)\right\} \right\rvert\right)\\
          &\leq 2 E_{\epsilon}E_{\rho} \text{Sup}_{f \in \mathcal{F}_{r,n}} \left\lvert\frac{1}{n} \sum_{t=1}^{n} \rho_t\epsilon_{t} \left\{ f(\mathcal{Y}_t) - f_{0}(\mathcal{Y}_t)\right\} \right\rvert\\
            & \leq 2 \sqrt{\sigma^{2} + 1} \times \text{Sup}_{\mathcal{Y}} \left\lvert f_{n}^{*}(\mathcal{Y}_t) - f_{0}(\mathcal{Y}_t) \right\rvert\\
          &+2K\sqrt{\frac{\nu}{n}}\int_{0}^{2V_1V_2}\sqrt{\frac{log\;\mathcal{N}\left(\frac{\varphi}{2V_1V_2\sqrt{\sigma^{2} + 1}}, \mathcal{F}_{r,n}, \lvert \lvert . \rvert \rvert_{\infty}\right)}{n} d\varphi}\\
          &\approx 2 \sqrt{\sigma^{2} + 1} \times \text{Sup}_{\mathcal{Y}} \left\lvert f_{n}^{*}(\mathcal{Y}_t) - f_{0}(\mathcal{Y}_t) \right\rvert
          + 8\sqrt{2}K  
          \sqrt{\frac{\nu C_{r,p,\Gamma_n,V_1, V_2}}{n^{2}}}
     \end{split}
 \end{equation}
 
 \begin{equation*}
     \begin{split}
        &= 2 \sqrt{\sigma^{2} + 1} \times \text{Sup}_{\mathcal{Y}} \left\lvert f_{n}^{*}(\mathcal{Y}_t) - f_{0}(\mathcal{Y}_t) \right\rvert\\
        & + 8\sqrt{2\nu}K  
          \sqrt{\frac{\left(r^{2} + r(((p+1)\Gamma_n)+4) + 2\right)(V_1 V_2)^{2}\log\left(\frac{ (r^{2} + r(((p+1)\Gamma_n)+4) + 2) (V_1V_2)^{2}}{V_1V_2 - 1}\right)}{n^{2}}}\\
        &\approx 2 \sqrt{\sigma^{2} + 1} \times \text{Sup}_{\mathcal{Y}} \left\lvert f_{n}^{*}(\mathcal{Y}_t) - f_{0}(\mathcal{Y}_t) \right\rvert
         + 8\sqrt{2}K\sqrt{\frac{\nu}{n}} \\ 
        &\sqrt{\frac{\left(r^{2} + r(((p+1)\Gamma_n)+4) + 2\right)(V_1V_2)^{2}\log\left( (r^{2} + r(((p+1)\Gamma_n)+4) + 2) (V_1V_2)\right)}{n}}
     \end{split}
 \end{equation*}
 To prove the convergence it's necessary to satisfy (\ref{eq:conv_final}) should converge to $0$ as $n \to \infty$. The first part of the addition converges to zero because of the universal approximation theorem. For the second part since we have assumed $\Gamma_n = o(n^{\beta}), r = o(n^{\psi})$ and $\psi + \beta < 1$ then the second part will converge to $0$ as $n \to \infty$. 
 \end{proof}
 \section{Proof of Theorem~\ref{thm:conv_rate}}\label{sec:pf_convrt}
 \begin{proof}
This proof will discuss the convergence rate of spatial 2-DNN in detail.
    \begin{itemize}
        \item[Step-I:] According to \cite{anthony1999neural} from Lemma 14.3, Lemma 14.7 of covering number inequality utilizing exponential decay of $\alpha-$Mixing condition will satisfy the following modified inequality  
        \begin{equation*}
        \begin{split}
            \mathcal{N}_{\infty}\left(\varphi, \mathcal{F}_{r,n}, \lvert \lvert \cdot \rvert \rvert_{\infty}\right) &\leq \left(\frac{2e}{\varphi}\right)^{r^{2}} \times \left(\frac{\left(V_1 V_2\right)^{2} (r ((p+1) \Gamma_n+4) + 2)}{\varphi \left(V_1 V_2 -1\right)}\right)^{r ((p+1)\Gamma_n+4) +2}\\
            \Rightarrow log\mathcal{N}_{\infty}\left(\varphi, \mathcal{F}_{r,n}, \lvert \lvert \cdot \rvert \rvert_{\infty}\right) &\leq r^{2} log \left(\frac{2e}{\varphi}\right) + (r ((p+1)\Gamma_n+4) +2)\\
            &\times log\left(\frac{\left(V_1 V_2\right)^{2} (r ((p+1)\Gamma_n+4) + 2)}{\varphi \left(V_1 V_2 -1\right)}\right)\\
            &\leq (r^{2}+r ((p+1)\Gamma_n+4) +2)\\
            &\times \log\left(\frac{2e\left(V_1V_2\right)^{2} (r^{2}+r ((p+1)\Gamma_n+4) + 2)}{\varphi \left(V_1 V_2 -1\right)}\right)
        \end{split} 
        \end{equation*}
        \item[Step - II] From Lemma 3.8 of \cite{mendelson2003advanced} for $\xi < 1$ along with the we have 
        \begin{equation*}
        \begin{split}
            &\int_{0}^{\xi} \sqrt{log\mathcal{N}_{\infty}\left(\varphi, \mathcal{F}_{r,n}, \lvert \lvert \cdot \rvert \rvert_{\infty}\right)} d\varphi \\
            &\leq \sqrt{(r^{2}+r ((p+1)\Gamma_n+4) +2)} \xi \sqrt{log\left(\frac{2e\left(V_1V_2\right)^{2} (r^{2}+r ((p+1)\Gamma_n+4) + 2)}{\xi \left(V_1 V_2 -1\right)}\right)}\\
            &= \Upsilon_{n}(\xi)
        \end{split}
        \end{equation*}
        \item[Step-III] Define a function $h:\xi \to \frac{\Upsilon_{n}(\xi)}{\xi^a}$ in $(0,\infty)$ for some $a > 0$ such that 
        \begin{equation*}
            \begin{split}
                 h(\xi) &=  \sqrt{(r^{2}+r ((p+1)\Gamma_n+4) +2)} \xi^{(1-a)}\\ &\times \sqrt{log\left(\frac{2e\left(V_1V_2\right)^{2} (r^{2}+r ((p+1)\Gamma_n+4) + 2)}{\xi \left(V_1 V_2 -1\right)}\right)}
            \end{split}
        \end{equation*}
        its derivative will become,
        \begin{equation*}
        \begin{split} 
           h^{'}(\xi) &= \frac{(1-a)}{\xi^{a}} \sqrt{\log\left(\frac{2e\left(V_1 V_2\right)^{2} (r^{2}+r ((p+1)\Gamma_n+4) + 2)}{\xi \left(V_1 V_2 -1\right)}\right)}\\
           &- \frac{\xi^{-a}}{\sqrt{log\left(\frac{2e\left(V_1 V_2\right)^{2} (r^{2} + r ((p+1)\Gamma_n+4) + 2)}{\xi \left(V_1 V_2 -1\right)}\right)}}
        \end{split}
        \end{equation*}
        Thus the function $h$ is decreasing whenever, $a > 1$ and $0 < \xi < 1$.
        \item[Step-IV] Assume $\gamma_n \lesssim \lvert \lvert \pi_{r,n}f_0 - f_0\rvert \rvert_{n}^{-1}$ where $\gamma_n = \sqrt{\zeta_n^{-1}}$
        \begin{equation*}
            \begin{split}
                \gamma_{n}^{2} \Upsilon_{n}\left(\frac{1}{\gamma_{n}}\right) &= \gamma_n \sqrt{\left(r^{2} + r ((p+1)\Gamma_n+4) +2\right)}\\
                & \sqrt{log(\gamma_n) + \log\left(\frac{2e\left(V_1V_2\right)^{2} (r^{2}+r ((p+1)\Gamma_n+4) + 2)}{ \left(V_1 V_2 -1\right)}\right)}\\
                &\lesssim \gamma_n \sqrt{\left(r^{2} + r ((p+1)\Gamma_n+4) +2\right)}\\ &\sqrt{\log(\gamma_n) +2+ log\left(\left(V_1V_2\right)^{2} (r^{2}+r ((p+1)\Gamma_n+4) + 2)\right)}
            \end{split}
        \end{equation*}
        \item[Step-V] Now $\gamma_{n}^{2} \Upsilon_{n}\left(\frac{1}{\gamma_{n}}\right) \lesssim \sqrt{n}$ if and only if 
\begin{equation*}
    \begin{aligned}
        &\gamma_n^{2} \left(r^{2} + r ((p+1)\Gamma_n+4) +2\right)\\ &\left(\log(\gamma_n) + \log\left(\left(V_1V_2\right)^{2} (r^{2}+r ((p+1)\Gamma_n+4) + 2)\right)\log n\right)
        \lesssim n \\
        &\text{Therefore,}\; \gamma_n \lesssim\; \\
        &\text{min}\;\left\{\left(\frac{n}{\left(r^{2} + r ((p+1)\Gamma_n+4) +2\right) \left(\log\left(\left(V_1V_2\right)^{2} (r^{2}+r ((p+1)\Gamma_n+4) + 2)\right)\right)}\right)^{1/2}, \right.\\
        &\left. \left(\frac{n}{\left(r^{2} + r ((p+1)\Gamma_n+4) +2\right) \log(n^{\psi + \beta}\log n)}\right)^{1/2}\right\}.
    \end{aligned}
\end{equation*}
 If we replace $\Gamma_n = o(n^{\psi}), V_1V_2 = o(\sqrt{\log n})$ and $r = o\left(n^{\beta}\right)$ such that $\psi + \beta < 1$ then,
 \begin{equation}\label{eq:final_cr}
     \begin{split}
         \gamma_n \lesssim\;
        &\text{min}\;\left\{\left(\frac{n}{n^{\psi + \beta} \log (n^{\psi + \beta} \log n)}\right)^{1/2},  \left(\frac{n}{n^{\psi + \beta} \log (n\log n )}\right)^{1/2}\right\}.
     \end{split}
 \end{equation}
From (\ref{eq:final_cr}) using Lemma 1, Lemma 2 of \cite{shen2023asymptotic}, Theorem 3.4.1 of \cite{van1996weak} and triangle inequality we get 
\begin{equation*}
        \begin{aligned}
           &\lvert \lvert \hat{f}_n - f_0 \rvert \rvert_{n} \leq \lvert \lvert \hat{f}_n - \pi_{r,n}f_0 \rvert \rvert_{n} + \lvert \lvert \pi_{r,n}f_0 - f_0 \rvert \rvert_{n}\\
            &=\mathcal{O}_{P}\left(max\left\{ \left\lvert\lvert\pi_{r,n} f_{0} - f_{0}\right\rvert\rvert_{n},\right.\right. \\
            &\left. \left. \sqrt{\frac{\left(r^{2} + r ((p+1)\Gamma_n+4) +2\right)\log\left(r^{2} + r ((p+1)\Gamma_n+4) +2\right)(V_1V_2)^{2}}{n}},\right.\right.\\
         &\left.\left. \sqrt{\frac{\left(r^{2} + r ((p+1)\Gamma_n+4) +2\right)\log (n\log n)}{n}} \right\}\right)\\  
          \approx \mathcal{O}_{P}&\left\{\text{max}\;\left[\left\lvert\lvert\pi_{r,n} f_{0} - f_{0}\right\rvert\rvert_{n}, 
         \left(\frac{n}{n^{\psi + \beta} \log (n^{\psi + \beta} \log n)}\right)^{-1/2},\right.\right.\\
         &\left.\left.
         \left(\frac{n}{n^{\psi + \beta} \log (n\log n )}\right)^{-1/2}\right]\right\}.
        \end{aligned}
    \end{equation*}
    \end{itemize}
\end{proof}

\begin{enumerate}
    \item[Step-1:] For any \(\varepsilon > 0\), we have
    \begin{equation}
        \begin{split}
            &P^{*}\left( \sup_{f \in \mathcal{F}_{r,n}} \left| \mathbb{L}_{n,\lambda}(f) - \mathcal{L}_{n,\lambda}(f) \right| > \frac{\varepsilon}{2}\right)
            \leq P\left(\left\lvert \frac{1}{n} \sum_{t=1}^{n} \varepsilon_{t}^{2} - \sigma^{4} \right\rvert > \frac{\varepsilon}{4}\right) \\
            &+ P^{*}\left( \sup_{f \in \mathcal{F}_{r,n}} \left|\frac{1}{n} \sum_{t=1}^{n} \epsilon_{t} \left\{ f(\mathcal{Y}_t) - f_{0}(\mathcal{Y}_t)\right\} + \lambda \|f\|_1 - \lambda \|f_0\|_1\right| > \frac{\varepsilon}{4}\right).
        \end{split}
    \end{equation}
    
    \item[Step-2:] The first part will converge to zero using the WLLN, but for the second part, using Markov's Inequality, we need to prove
    \[    E_{\epsilon}\left( \sup_{f \in \mathcal{F}_{r,n}} \left|\frac{1}{n} \sum_{t=1}^{n} \epsilon_{t} \left\{ f(\mathcal{Y}_t) - f_{0}(\mathcal{Y}_t)\right\} + \lambda \|f\|_1 - \lambda \|f_0\|_1\right|\right) \to 0\; \text{as}\; n\to \infty.    \]
    
    \item[Step-3:] According to symmetrization for \(\alpha\)-Mixing spatial surfaces with grid spacing \(\eta_{n}  \to 0\) as \(n \to \infty\),
    \begin{equation}
        \begin{split}
            &E_{\epsilon} \left\| \mathbb{P}_{n,\lambda} - P_{\lambda} \right\|_{\mathcal{F}_{r,n}} 
            = E_{\epsilon}\left( \sup_{f \in \mathcal{F}_{r,n}} \left|\frac{1}{n} \sum_{t=1}^{n} \epsilon_{t} \left\{ f(\mathcal{Y}_t) - f_{0}(\mathcal{Y}_t)\right\} + \lambda \|f\|_1 - \lambda \|f_0\|_1\right|\right)\\
            &\leq 2 E_{\epsilon}E_{\rho} \sup_{f \in \mathcal{F}_{r,n}} \left|\frac{1}{n} \sum_{t=1}^{n} \rho_t\epsilon_{t} \left\{ f(\mathcal{Y}_t) - f_{0}(\mathcal{Y}_t)\right\} + \lambda \|f\|_1 - \lambda \|f_0\|_1\right|,
        \end{split}
    \end{equation}
    where \(\rho_t;\;t=1,2,...,n\) are Rademacher random variables independent of \(\epsilon_{1}, \epsilon_{2},..., \epsilon_{n}\). For fixed values of \(\epsilon_{1}, \epsilon_{2},..., \epsilon_{n}\) (conditionally), \(\sum_{t=1}^{n} \rho_t\epsilon_{t} \left\{ f(\mathcal{Y}_t) - f_{0}(\mathcal{Y}_t)\right\}\) is a sub-Gaussian process.
    
    \item[Step-4:] Using the definition of the covering number and Corollary 2.2.8 of \cite{wellner2013weak}, we obtain a universal constant \(K\). Then, using Dudley's integral entropy bound in VC-dimension \(\nu\) with any \(f_{n}^{*} \in \mathcal{F}_{r,n}\) with \(n \geq N_1\), we get
    \begin{equation}
        \begin{split}
            &E_{\rho} \sup_{f \in \mathcal{F}_{r,n}} \left|\frac{1}{n} \sum_{t=1}^{n} \rho_t\epsilon_{t} \left\{ f(\mathcal{Y}_t) - f_{0}(\mathcal{Y}_t)\right\} + \lambda \|f\|_1 - \lambda \|f_0\|_1 \right| \\
            &\leq E_{\rho} \left|\frac{1}{n} \sum_{t=1}^{n} \rho_t\epsilon_{t} \left\{ f_{n}^{*}(\mathcal{Y}_t) - f_{0}(\mathcal{Y}_t)\right\} \right| \\
            &+ K\sqrt{\frac{\nu}{n}}\int_{0}^{2V_1V_2}\sqrt{\frac{\log\;\mathcal{N}(\varphi, \mathcal{F}_{r,n}, \|\cdot\|_{\infty})}{n}} d\varphi.
        \end{split}
    \end{equation}
    
    \item[Step-5:] The first term in the inequality can be handled similarly as in the original proof, using the concept of universal approximation. The second term will now include the effect of the \(\ell_1\) regularization, leading to
    \begin{equation}
        \begin{split}
            &E_{\rho} \sup_{f \in \mathcal{F}_{r,n}} \left|\frac{1}{n} \sum_{t=1}^{n} \rho_t\epsilon_{t} \left\{ f(\mathcal{Y}_t) - f_{0}(\mathcal{Y}_t)\right\} \right| \\
            &+ K\sqrt{\frac{\nu}{n}}\int_{0}^{2V_1V_2}\sqrt{\frac{\log\;\mathcal{N}\left(\frac{\varphi}{2V_1V_2\sqrt{\sigma^{2} + 1}}, \mathcal{F}_{r,n}, \|\cdot\|_{\infty}\right)}{n}} d\varphi \\
            &+ \lambda \sup_{f \in \mathcal{F}_{r,n}} \|f\|_1 - \lambda \|f_0\|_1.
        \end{split}
    \end{equation}
    
    \item[Step-6:] We proceed similarly to the original proof, but now also controlling the effect of the \(\ell_1\) penalty. Assuming \(\lambda = o(n^{-\alpha})\) for some \(\alpha > 0\), the regularization term will diminish as \(n \to \infty\).
    
    \item[Step-7:] Combining all steps, including the regularization term, we get
    \begin{equation}
        \begin{split}
            &E_{\epsilon}\left( \sup_{f \in \mathcal{F}_{r,n}} \left|\frac{1}{n} \sum_{t=1}^{n} \epsilon_{t} \left\{ f(\mathcal{Y}_t) - f_{0}(\mathcal{Y}_t)\right\} + \lambda \|f\|_1 - \lambda \|f_0\|_1\right|\right)\\
            &\leq 2 E_{\epsilon}E_{\rho} \sup_{f \in \mathcal{F}_{r,n}} \left|\frac{1}{n} \sum_{t=1}^{n} \rho_t\epsilon_{t} \left\{ f(\mathcal{Y}_t) - f_{0}(\mathcal{Y}_t)\right\} \right| \\
            &+ 8\sqrt{2}K\sqrt{\frac{\nu C_{r,p,\Gamma_n,V_1, V_2}}{n^{2}}} + \lambda \sup_{f \in \mathcal{F}_{r,n}} \|f\|_1 - \lambda \|f_0\|_1.
        \end{split}
    \end{equation}
    
    \item[Step-8:] To prove convergence, the expression should tend to zero as \(n \to \infty\). The first part converges to zero by the universal approximation theorem. The second part also converges to zero under the assumptions \(\Gamma_n = o(n^{\beta}), r = o(n^{\psi})\), and \(\psi + \beta < 1\). The third term involving \(\lambda\) converges to zero by the choice of \(\lambda\) as \(o(n^{-\alpha})\).
\end{enumerate}
% \section{Section title of first appendix}\label{secA1}

% An appendix contains supplementary information that is not an essential part of the text itself but which may be helpful in providing a more comprehensive understanding of the research problem or it is information that is too cumbersome to be included in the body of the paper.

%%=============================================%%
%% For submissions to Nature Portfolio Journals %%
%% please use the heading ``Extended Data''.   %%
%%=============================================%%

%%=============================================================%%
%% Sample for another appendix section			       %%
%%=============================================================%%

%% \section{Example of another appendix section}\label{secA2}%
%% Appendices may be used for helpful, supporting or essential material that would otherwise 
%% clutter, break up or be distracting to the text. Appendices can consist of sections, figures, 
%% tables and equations etc.

\end{appendices}

%%===========================================================================================%%
%% If you are submitting to one of the Nature Portfolio journals, using the eJP submission   %%
%% system, please include the references within the manuscript file itself. You may do this  %%
%% by copying the reference list from your .bbl file, paste it into the main manuscript .tex %%
%% file, and delete the associated \verb+\bibliography+ commands.                            %%
%%===========================================================================================%%

\bibliography{sn-bibliography}% common bib file
%% if required, the content of .bbl file can be included here once bbl is generated
%%\input sn-article.bbl

\end{document}